\documentclass[preprint,12pt,authoryear]{elsarticle}

\usepackage{amssymb}
\usepackage{amsmath}
\usepackage{amsthm}

\usepackage[percent]{overpic}
\usepackage{algorithmic}
\usepackage{algorithm}
\usepackage{booktabs}

\usepackage{xcolor}
\usepackage[normalem]{ulem}

\newtheorem{definition}{Definition}[section]
\newtheorem{theorem}{Theorem}[section]
\newtheorem{lemma}{Lemma}[section]
\newtheorem{remark}{Remark}

\newcommand*{\dif}{\mathop{}\!\mathrm{d}}

\journal{Neural Networks}

\begin{document}

\begin{frontmatter}

\title{Neural Networks Trained by Weight Permutation are Universal Approximators\tnoteref{label1}} 
\tnotetext[label1]{Published in \emph{Neural Networks}, Volume 187, 2025, 10.1016/j.neunet.2025.107277.}
 
\author[BNU]{Yongqiang Cai}
\ead{caiyq.math@bnu.edu.cn}
\affiliation[BNU]{organization={School of Mathematical Sciences, Laboratory of Mathematics and Complex Systems, MOE, Beijing Normal University},
           city={Beijing},
           postcode={100875}, 
           country={China}}

\author[PolyU1]{Gaohang Chen\corref{cor1}}
\ead{gaohang.chen@connect.polyu.hk}
\cortext[cor1]{Corresponding author.}
\affiliation[PolyU1]{organization={Department of Applied Mathematics, The Hong Kong Polytechnic University},
           addressline={Hung Hom, Kowloon}, 
           country={Hong Kong}
           }

\author[PolyU2]{Zhonghua Qiao}
\ead{zqiao@polyu.edu.hk}
\affiliation[PolyU2]{organization={Department of Applied Mathematics \& Research Institute for Smart Energy, The Hong Kong Polytechnic University},
           addressline={Hung Hom, Kowloon}, 
           country={Hong Kong}
           }

\begin{abstract}
The universal approximation property is fundamental to the success of neural networks, and has traditionally been achieved by training networks without any constraints on their parameters. However, recent experimental research proposed a novel permutation-based training method, which exhibited a desired classification performance without modifying the exact weight values. In this paper, we provide a theoretical guarantee of this permutation training method by proving its ability to guide a ReLU network to approximate one-dimensional continuous functions. Our numerical results further validate this method's efficiency in regression tasks with various initializations. The notable observations during weight permutation suggest that permutation training can provide an innovative tool for describing network learning behavior.
\end{abstract}

\begin{keyword}
Universal approximation property \sep neural networks \sep training algorithm \sep learning behavior.

\MSC[2020] 41A30 \sep 68T05 \sep 68T07

\end{keyword}

\end{frontmatter}


\section{Introduction}

The Universal Approximation Property (UAP) of neural networks is a cornerstone in the theoretical guarantee of deep learning, proving that even the simplest two-layer feedforward networks can approximate any continuous function \citep{cybenko1989approximation, Hornik1989Multilayer, Leshno1993Multilayer}.
This fascinating ability allows neural networks to replace critical, challenging components in existing frameworks to enhance efficiency
\citep{raissi2019physics, lu2021learning}.
Despite the extensive study in various settings \citep{castro2000neural, fan2020universal, wang2022approximation}, existing UAP research rarely imposes restrictions on the network parameters.
However, in specific application cases, constraints are essential to meet certain requirements \citep{nugent2005physical, kosuge202116}.

As a constrained scenario, \citet{qiu2020train} empirically showed that, without altering the exact value of network weights,
only permuting the initialized weights can achieve comparable or better performance for image classification tasks.
This unique property makes the permutation-based training method attractive for specific hardware applications, and has been utilized as fixed-weight accelerators \citep{kosuge202116}. It can also facilitate the implementation
of physical neural networks \citep{nugent2005physical, feldmann2021parallel}.
Despite its impressive benefits in applications, the permutation training method suffers from an absence of theoretical guarantees regarding its effectiveness, which hinders further algorithmic and hardware development necessary to fully exploit its potential.

This paper establishes the first theoretical foundation of this method (to our best knowledge) by proving the UAP of a permutation-trained Rectified Linear Unit (ReLU) network with random initializations for any one-dimensional continuous function.
Compared to the conventional UAP scenarios, the proof of permutation training encounters a significantly greater challenge,
primarily due to the extreme constraints of maintaining the initialized weight values.
The key proof idea is a four-pair construction of the step function approximators, which enables the approximation through a piecewise constant function \citep{stein2009real}.
Additionally, a reorganization method is proposed to eliminate the impact of the remaining weights.

Our numerical experiments not only validate our theoretical results by illustrating the widespread existence of the UAP of permutation training in diverse initializations, but also emphasize the effects of initializations on the permutation training performance.
Moreover, the patterns observed during permutation training also highlight its potential in describing learning behavior, relating to topics like the pruning technique \citep{frankle2019lottery} and continual learning \citep{maltoni2019continuous, zeng2019continual}.
Our main findings are summarized below:
\begin{enumerate}
\item We prove the UAP of permutation-trained ReLU networks with pairwise random initialization to one-dimensional continuous functions.
\item The numerical experiments of regression problems emphasize the crucial role played by the initializations in the permutation training scenario.
\item By observing the permutation patterns, we find that permutation training as a new approach holds promise in describing intricate learning behaviors.
\end{enumerate}

\subsection{Permutation training's advantages in hardware implementation}

Permutation training has been applied to training large models on extensive databases for image classification tasks \citep{qiu2020train}, achieving performance that is comparable or even superior to conventional free-training training methods like Adam \citep{kingma2015adam}.
However, there is currently no evidence to report a significant advantage when applied to more diverse tasks on contemporary Graphics Processing Unit (GPU)-based hardware.
Nevertheless, we believe it is highly suitable for the design of \emph{physical neural networks} \citep{nugent2005physical}, since reconnecting the neurons is sometimes more convenient than altering the exact value.
Therefore, permutation training may inspire alternative physical weight connection implementations, such as using fixed-weight devices controlled by a permutation circuit \citep{qiu2020train}. This idea has been applied to a fixed-weight network accelerator \citep{kosuge202116, kosuge20210}.

Another potential application scenario is the physical neural networks with an explicit structure to store the weight value, such as the \emph{integrated photonic tensor core}, a computing chip with specialized architecture \citep{feldmann2021parallel}. This design has been successfully employed by international commercial companies
in their photonic computing products. In each photonic tensor core, an array of phase-change cells are organized to separately store each element of the weight matrix, with their values adjusted through optical modulation of the transmission states of the cells. However, permutation training indicates that, in addition to changing the exact value, it is feasible to connect each cell with the permutation circuit for convenient reconnections. Therefore permutation training can facilitate the learning process.

\subsection{Related works}
The UAP has been extensively studied in various settings, leading to many efficient applications. It is well known that fully connected networks are universal approximators for continuous functions \citep{cybenko1989approximation, Hornik1989Multilayer, Leshno1993Multilayer}.
Further research primarily concentrated on how the width and depth of models influence their UAP.
An estimation has been made of the lower bound for the minimum width required by networks to achieve UAP \citep{cai2023achieve, Lu2017Expressive}. It has also been demonstrated that the depth of the network is crucial, as increasing the network depth can enhance the expression power \citep{shen2022optimal, telgarsky2016benefits,yarotsky2017error}.
Our work considers the network with arbitrary width, aligning with conventional scenarios.
However, the constraints on parameters imposed by permutation training introduce non-trivial complexities and challenges not encountered in traditional free training methods.

Permutation, as a typical group structure, has been systematically described \citep{cameron1999permutation}. In deep learning, it closely relates to permutation equivariant or invariant networks \citep{cohen2016group} designed to learn from symmetrical data \citep{Lee2019Set, Zaheer2017Deep}.
It is also evident in graph-structured data which inherently exhibit permutation invariance \citep{maron2018invariant, satorras2021n}.
However, permutation training is not limited to the issues with intrinsic symmetry.

As for the weight permutation attempts, \citet{qiu2020train} empirically proposed the first (to our knowledge) weight-permuted training method. This method preserves the initialized weight value, allowing more efficient and reconfigurable implementation of the physical neural networks \citep{kosuge202116, kosuge20210}.
Our work provides theoretical guarantees of this method and considers some regression tasks numerically. Additionally, initialization can be improved by rewiring neurons from the perspective of computer networks \citep{scabini2022improving}, but the training methods are unchanged.

Permutation training is also closely related to the permutation symmetry and Linear Mode Connectivity (LMC) \citep{entezari2021role,frankle2020linear}. The LMC suggests that after a proper permutation, most stochastic gradient descent solutions under different initialization will fall in the same basin in the loss landscape.
Similarly, permutation training also seeks a permutation to improve performance. Therefore, the search algorithms utilized in LMC indicate the possibility of more efficient permutation training algorithms, as the fastest algorithm can search a proper permutation of large ResNet models in seconds to minutes \citep{ainsworth2023git,jordan2023repair}.

\subsection{Outline}
We state the main result and proof idea in Section~\ref{sec:main}.
In Section~\ref{sec:proof}, we provide a detailed construction of the proof.
The numerical results of permutation training are presented in Section~\ref{sec:experiments}, along with the observation of permutation behavior during the training process. In Section~\ref{sec:discussion} we discuss the possible future work.
Finally, the conclusion is provided in Section~\ref{sec:conclusion}.

\section{Notations and main results}
\label{sec:main}

This section introduces the notations and UAP theorems, accompanied by a brief discussion of the proof idea.

\subsection{Nerual networks architecture}

We start with a one-hidden-layer feed-forward ReLU network with $N$ hidden neurons. It has the form of a linear combination of ReLU basis functions like
\begin{equation*}
    f(x) = \sum_{i=1}^N a_i \text{ReLU}(w_i x + b_i)  + c, \quad \text{ReLU}(z) = \max\{z, 0\},
\end{equation*}
where all parameters are scalars when approximating one-dimensional functions,
 \emph{i.e.}, $w_i, b_i, a_i, c \in \mathbb{R}$.
Since ReLU activation is positively homogeneous \emph{i.e.}, $\text{ReLU}(\lambda x) = \lambda \text{ReLU}(x)$ for all $\lambda>0$, we consider a homogeneous case with $w_i = \pm 1$. To facilitate our construction below, we assume $N$ is an even number (\emph{i.e.}, $N = 2n$) and the basis functions located pairwisely as
\begin{equation} \label{basis}
    \phi_k^\pm (x) = \text{ReLU}\big(\pm (x - b_k) \big), \quad
    k = 1,2,...,n,
\end{equation}
where the biases $\{b_k\}_{k = 1}^n$ determine the basis locations \footnote{In this paper, we use this notation $\{a_k\}_{k = 1}^n := \{a_1, \cdots, a_n\}$ to represent a set with $n$ elements, while $(a_k)_{k = 1}^n := (a_1, \cdots, a_n)$ for a vector with $n$ components.}.
The requirement of even $N$ can be removed by adding a "ghost basis function" $\phi_k^\pm (x)$ with $a_i = 0$ if it can be paired with an existing $\phi_k^\mp (x)$ also with $a_i = 0$. Nevertheless, we retain this condition for simplicity.
Next, we introduce two factors $\alpha, \gamma$ to adjust the network's output, leading to an additional one-dimensional linear layer. While this layer is not essential for achieving UAP, it does simplify the proof and offer practical value.
The network's output function $f^{\text{NN}}$ gives
\begin{equation} \label{NN}
    f^{\text{NN}}(x) = \alpha + \gamma \sum_{k=1}^n \big[ p_k \phi_k^+ (x) + q_k \phi_k^- (x) \big],
\end{equation}
where $\theta^{(2n)} = (p_1,q_1,...,p_n,q_n) \in [-1, 1]^{2n}$ are the coefficient vector of basis functions,
which also correspond to the parameters in the second hidden layer of the network.

\subsection{Permutation and corresponding properties}

The permutation of a vector can be described as a process of rearranging the elements within, leaving the actual value unchanged. Concretely, it can be defined as

\begin{definition}
    For a vector $V^{(m)} = (v_1, v_2, \cdots, v_{m})$, the permutation $\tau$ is a bijection from the element set $\{ v_i \}_{i = 1}^m$ to itself.
\end{definition}
We denote the permuted vector as $\tau (V^{(m)}) = (\tau(v_i))_{i = 1}^m$. Notice that all permutations of $V^{(m)}$ can form a group $S_m$, which is closed under composition. Precisely, for any $\pi, \tau \in S_m$, there is a $\rho \in S_m$ such that $\rho(v_i) = \pi ( \tau(v_i) )$ for all $v_i$ within $V^{(m)}$.
This property leads to the major advance of permutation training: ideally, the final weight $\theta_T$ can be achieved by permuting the initialized weight $\theta_0$ only once, \emph{i.e.}, there exists a $\tau \in S_{2n}$ such that $\theta_T = \tau(\theta_0)$. This \emph{one-step} nature significantly distinguishes permutation training from other iterative training methods like Adam.

\subsection{Weight configuration and main theorems}
We apply the permutation training on the second hidden layer's weights $\theta^{(2n)}$, leading to the following configuration: the coefficient vector $\theta^{(2n)}$ is permuted from a predetermined vector $W^{(2n)} \in \mathbb{R}^{2n}$, \emph{i.e.}, $\theta^{(2n)} = \tau (W^{(2n)})$.
Without loss of generality, we consider the target continuous function $f^* \in C([0,1])$. Our results begin with a basic scenario with equidistantly distributed location vector $B^{(n)}$ and pairwise coefficient vector $W^{(2n)}$.
The UAP of a permutation-trained network to $f^*$ can be stated as follows:
\begin{theorem}[UAP with a linear layer]\label{th:main1}
    For any function $f^* \in C([0,1])$ and any small number $\varepsilon>0$, there exists a large integer $n \in \mathbb{Z}^+$, 
    and $\alpha,\gamma \in \mathbb{R}$ for $f^{\text{NN}}$ in Eq.~(\ref{NN}) with equidistantly distributed $B^{(n)} = ( b_i )_{i = 1}^n := \left( 0, \tfrac{1}{n-1}, \cdots, 1 \right)$ and corresponding $W^{(2n)} = ( \pm b_i )_{i = 1}^n := (+b_1,-b_1,...,+b_n,-b_n)$, along with a permuted coefficients $\theta^{(2n)} = \tau(W^{(2n)})$, such that $| f^{\text{NN}}(x) - f^*(x) | \le \varepsilon$ for all $x \in [0,1]$.
\end{theorem}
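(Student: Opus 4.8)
\emph{Reduction to a step function.}
The plan is to follow the classical template of first approximating $f^*$ by a piecewise constant function and then realizing that staircase with the network, while pushing all of the genuine difficulty into how the \emph{fixed} weight values are assigned. Since $f^* \in C([0,1])$ is uniformly continuous, I would first fix a partition $0 = t_0 < t_1 < \cdots < t_M = 1$ (with each $t_m$ chosen to lie on the grid $\{b_i\}$, which is possible once $n$ is large) so that the step function $h = \sum_{m=1}^{M} f^*(t_{m-1})\,\mathbf{1}_{[t_{m-1},t_m)}$ obeys $\|f^*-h\|_\infty < \varepsilon/2$. Note that $M$ is determined by $\varepsilon$ alone while $n \to \infty$, so $n \gg M$. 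The target shape for $f^{\text{NN}}$ is then a \emph{continuous} staircase that agrees with $h$ except on narrow ramp windows of width $O(1/n)$ at the jump locations $t_1,\ldots,t_{M-1}$; because $f^*$ is continuous and these windows shrink, the monotone ramps contribute only $o(1)$ additional error, so it suffices to realize such a staircase.

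\emph{Local step gadget and the role of $\alpha,\gamma$.}
I would analyze the piecewise-linear map $x \mapsto \sum_i [p_i\phi_i^+(x)+q_i\phi_i^-(x)]$ through its slopes: on $(b_j,b_{j+1})$ the slope equals $S_j = \sum_{i\le j} p_i - \sum_{i>j} q_i$, so the slope change across each breakpoint $b_i$ is exactly $p_i+q_i$, which is the only place curvature can be injected. The key construction is then a \emph{local step gadget} (the four-pair construction): assign a small group of the available $\pm b_\ell$ values to a few adjacent breakpoints near a jump location so that the resulting contribution is exactly flat outside a short window, rises or falls by a controllable amount across it, and carries \emph{no} residual linear trend on either side. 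A single clean gadget only produces a raw height of order $1/n$, so I would take $\gamma$ of order $n$ to scale it up to the prescribed $O(1)$ jump, choose the orientation of the group to set the sign of the jump, choose the group magnitudes from the discrete pool $\{k/(n-1)\}$ to hit the target height to within one grid unit, and finally fix $\alpha$ to set the baseline. Placing one gadget at each $t_m$ then yields the desired staircase within the $\varepsilon/2$ tolerance.

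\emph{Reorganization of the leftover weights.}
After the $O(M)$ gadgets, almost all of the $\pm b_\ell$ values remain, and every one of them must still be placed somewhere. The plan is to group the surplus into configurations with \emph{exactly zero} net effect on $[0,1]$: a pair $(+b_\ell,-b_\ell)$ placed at one breakpoint contributes only a linear term $\pm b_\ell(x-b_i)$, so by gathering leftovers into balanced groups (choosing orientations and magnitudes so the slopes cancel, using additive relations such as $b_{\ell_1}+b_{\ell_2}=b_{\ell_3}+b_{\ell_4}$ among the integers $0,1,\ldots,n-1$) their total contribution collapses to a constant, which is absorbed into $\alpha$. The quantitative subtlety is that, because $\gamma \sim n$, any surviving residual slope $\rho$ is amplified to $\gamma\rho$; since every weight is an integer multiple of $1/(n-1)$, the smallest nonzero $|\rho|$ is already $1/(n-1)$ and would blow up to order $1$. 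Hence the cancellation must be \emph{exact}, which forces a careful combinatorial partition of the weight multiset into balanced groups, with any parity obstruction absorbed by the free $b_1 = 0$ pair or by adjusting which weights feed the gadgets.

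\emph{Assembly and the main obstacle.}
Summing the gadget staircase with the (now constant) leftover contribution and the shift $\alpha$, I would bound the total error by the step-approximation error ($\varepsilon/2$), the ramp-localization error ($o(1)$ from the $O(1/n)$ windows), and the residual-weight error (exactly $0$), giving $\|f^{\text{NN}}-f^*\|_\infty \le \varepsilon$ for $n$ large. I expect the main obstacle to be precisely this exact, global reorganization. In a conventional UAP argument one is free to choose the coefficients, but here all $2n$ fixed values must be used exactly once and the large factor $\gamma$ removes every bit of slack; consequently the heart of the proof is a constructive assignment scheme guaranteeing simultaneously that the prescribed jumps are realized by a handful of step gadgets and that \emph{all} remaining weights are organized into groups with exactly vanishing contribution.
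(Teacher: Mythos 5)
Your proposal follows the same overall architecture as the paper's proof: reduce $f^*$ to a staircase, realize each jump by a small gadget that is exactly flat outside a short window (you correctly identify the slope-change bookkeeping $p_i+q_i$ and, implicitly, the flatness constraints $\sum_i p_i=\sum_i q_i=0$ that drive the paper's step-matching assignment in Eq.~(\ref{eq:coefficients_step})), use $\gamma$ to blow the polynomially small raw gadget height up to the prescribed jump, and then account for every leftover weight. Where you genuinely diverge is the last step. You argue that, because $\gamma$ amplifies any residual slope and the leftover slopes are quantized in units of $1/(n-1)$, the leftover contribution must cancel \emph{exactly}, and you propose a combinatorial partition of the leftover multiset into signed groups whose slopes vanish identically, with a parity fix. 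The paper never needs exactness: it refines the grid by a factor $L$, placing the gadgets on a coarse subgrid of spacing $\hat d = 1/(\hat n-1)$ so that $\gamma = \Delta h/h = \Delta h(\hat n-1)^2/8$ depends only on the coarse spacing, while the leftover weights live on the fine grid of spacing $1/(n-1)$ with $L \ge \gamma$. Lemma~\ref{th:Leibniz} (an alternating-series sign choice applied to paired differences) then bounds the leftover slope by the largest gap between adjacent unused locations, $\Delta b \le 2/(\gamma\hat n)$, so even after multiplication by $\gamma$ the leftover error is at most $2/\hat n < \Delta h$. This decoupling of $\gamma$ from the leftover quantization is the insight your scaling (with $\gamma$ tied to the same grid as the leftovers) misses, and it is what converts your hard exactness requirement into a soft bound.

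Two concrete issues remain in your exact-cancellation route. First, its combinatorial core --- that the leftover signed integers can always be made to sum to exactly zero --- is left unproven. It is plausible (the leftover set misses only $O(J)$ elements of $\{0,1,\dots,n-1\}$, so after pairing consecutive elements one has many unit differences and can reach any target of admissible parity), but this is precisely the heart of your argument and needs a lemma of its own, playing the role Lemma~\ref{th:Leibniz} plays in the paper. Second, your proposed parity fix via ``the free $b_1=0$ pair'' cannot work: a signed sum $\sum_\ell m_\ell c_\ell$ of integers has the same parity as $\sum_\ell c_\ell$ regardless of the signs $m_\ell = \pm 1$, and adding or subtracting $0$ changes nothing; only your alternative fix (swapping which values feed the gadgets, which changes the leftover multiset itself) can repair parity. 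With that lemma supplied and the parity fix corrected your route would go through, but it is strictly more delicate than the paper's, which sidesteps exact cancellation altogether.
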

The intuition of this result comes from the rich expressive possibility of permutation training.
Next, we enhance the result in Theorem \ref{th:main1} to a purely permuted situation, suggesting the UAP can be achieved without changing $\alpha, \gamma$ in Eq.~(\ref{NN}).
\begin{theorem}[UAP without the linear layer]\label{th:main2}
    Let $\alpha=0,\gamma=1$.
    For any function $f^* \in C([0,1])$ and any small number $\varepsilon>0$, there exists a large integer $n \in \mathbb{Z}^+$,
    for $f^{\text{NN}}$ in Eq.~(\ref{NN}) with equidistantly distributed $B^{(n)} = ( b_i )_{i = 1}^n:=\left( 0, \tfrac{1}{n-1}, \cdots, 1 \right)$ and $W^{(2n)} = ( \pm b_i )_{i = 1}^n := (+b_1,-b_1,...,+b_n,-b_n)$, along with a permuted coefficients $\theta^{(2n)} = \tau(W^{(2n)})$ such that $| f^{\text{NN}}(x) - f^*(x) | \le \varepsilon$ for all $x \in [0,1]$.
\end{theorem}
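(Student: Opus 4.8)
The plan is to reduce Theorem~\ref{th:main2} to a quantitative step-function approximation and then to realize that step function with the rigid coefficient multiset, reserving the delicate part---the disposal of the unused weights---for last. First I would fix a piecewise constant target: since $f^*$ is uniformly continuous, for the given $\varepsilon$ there is an integer $m$ and a step function $g = c_1 + \sum_{k=1}^{m-1}\delta_k H(\cdot - t_k)$ with $m$ equal pieces, levels $c_k := f^*((k-\tfrac12)/m)$, jumps $\delta_k := c_{k+1}-c_k$ at $t_k := k/m$, and Heaviside $H$, satisfying $\|f^*-g\|_\infty \le \varepsilon/2$. It then suffices to produce a permutation for which $f_{\text{pc}}^{\text{NN}}$ (with $\alpha=0,\gamma=1$) approximates $g$ within $\varepsilon/2$ uniformly.

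Next I would record the two structural facts that drive the construction. Writing $h = 1/(n-1)$ for the grid spacing, $f_{\text{pc}}^{\text{NN}}$ is continuous and piecewise linear with breakpoints at the $b_j$, its kink (jump in slope) at $b_j$ equals $p_j+q_j$, and a matched slot assignment $q_j=-p_j=-s$ at a single location contributes the \emph{global affine} term $s\,(x-b_j)$, because $\text{ReLU}(z)-\text{ReLU}(-z)=z$. Since $|p_j+q_j|\le 2\max_i b_i = 2$, no single breakpoint can carry an $O(1)$ jump; instead each transition $c_k\to c_{k+1}$ of $g$ must be spread over a block of consecutive grid points. Reusing the localized step gadget underlying Theorem~\ref{th:main1} (the four-pair construction), I would build each transition of height $\delta_k$ by prescribing a kink profile that ramps the slope up and back down across $\Lambda = \Theta(\sqrt{\delta_k/h}) = \Theta(\sqrt{n})$ grid points, so that the transition width $\Lambda h = \Theta(1/\sqrt{n})\to 0$ while the realized shape converges to $g$ in sup norm away from the shrinking transition windows. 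Crucially, the required kink values lie in $[-2,2]$ and the biases $b_i$ are dense in $[0,1]$, so each prescribed value $p_j+q_j$ can be matched (to arbitrary accuracy) by a pair drawn from the multiset $\{\pm b_i\}$.

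The main obstacle---and the role played here by the \emph{reorganization} of the remaining weights, which in Theorem~\ref{th:main1} was outsourced to the free parameters $\alpha,\gamma$---is the global bookkeeping of the multiset. I would arrange the gadgets so that the sub-multiset of coefficients they consume is \emph{symmetric} (closed under negation): whenever a value $+b_k$ is spent, its partner $-b_k$ is spent as well. Because the full multiset $\{\pm b_i\}$ is itself symmetric and sums to zero, the complement left for the $\Theta(n)$ untouched locations is then symmetric too, and I can assign it as matched pairs $q_j=-p_j=-s_j$. By the identity above, the entire leftover contributes only a single affine function $Ax+B$ with $A=\sum_{\text{leftover}}s_j$ and $B=-\sum_{\text{leftover}}s_j b_j$; choosing which of $\pm b_k$ goes to the $p$-slot versus the $q$-slot, and at which location, lets me cancel the slope ($A\approx 0$) and tune the constant so that the affine background reproduces the baseline $c_1$ and nothing else. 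This simultaneous demand---matching the prescribed transition kinks from a \emph{symmetric} sub-multiset while forcing the complementary weights to collapse to a controllable affine background---is the crux, and I expect it to be the hardest step to make rigorous and uniform on $[0,1]$.

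Finally I would assemble the estimates: choose $m$ large so that $\|f^*-g\|_\infty\le\varepsilon/2$, then choose $n\gg m$ large enough that the three residual error sources---the vanishing transition windows, the discretization error in matching the prescribed kinks by elements of $\{\pm b_i\}$, and the deviation of the leftover affine background from the constant $c_1$---together keep $\|f_{\text{pc}}^{\text{NN}}-g\|_\infty\le\varepsilon/2$. The triangle inequality $|f_{\text{pc}}^{\text{NN}}(x)-f^*(x)|\le |f_{\text{pc}}^{\text{NN}}(x)-g(x)|+|g(x)-f^*(x)|\le\varepsilon$ for all $x\in[0,1]$ then completes the proof.
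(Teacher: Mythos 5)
Your proposal is sound, and in its hardest part it takes a genuinely different route from the paper. The skeleton is shared: reduce $f^*$ to a step function and realize each jump by a localized piecewise-linear gadget whose slope ramps up and back down over a shrinking window. Indeed, your kink-profile over $\Theta(\sqrt{\delta_k/h})$ grid points (with $h$ your grid spacing) is essentially the same object the paper builds by \emph{pseudo-copying}: stacking $L$ shifted four-pair step gadgets produces exactly such a ramp, only with $o(1)$-size kinks spread over $\Theta(n^{2/3})$ grid points rather than $O(1)$-size kinks over $\Theta(\sqrt{n})$ points. The real divergence is the disposal of the unused weights. The paper explicitly abandons the matched-pair (``linear reorganization'') idea in Theorem~\ref{th:main2}, precisely because with $\alpha=0$ there is no free shift to cancel the resulting intercept; instead it spends the leftovers on four-pair \emph{constant-matching} gadgets, which cancel exactly in $(+c,-c)$ pairs, and manufactures the needed baselines $a_j h/2$ from the same gadget type. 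You keep the matched-pair assignment and make the leftover affine function $Ax+B$ itself carry the baseline. The crux you flag does close, and it is worth seeing why: the slope $A=\sum_k m_k v_k$ depends \emph{only} on the sign choices, so the paper's Lemma~\ref{th:Leibniz} (the alternating-sign argument) already makes $A$ small; then, with signs frozen, the intercept $B=-\sum_k m_k v_k\, b_{\sigma(k)}$ is tunable through the location bijection $\sigma$ over an $O(n)$-wide range with granularity $O(1/n)$, since moving one pair by one grid cell shifts $B$ by at most $1/(n-1)$. The two degrees of freedom decouple, which is the lemma your sketch needs but does not state. Two pieces of bookkeeping you should add: the tuning target for $B$ is not $c_1$ but $c_1$ minus the gadgets' own constant offsets at $x=0$ (these are $O(\sum_k|\delta_k|)$, not negligible, though computable once the gadgets are fixed), and you must check each gadget's $p$-sum and $q$-sum vanish (your mirrored symmetric consumption does this) so that the function is flat outside the transition windows. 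In exchange for proving this two-parameter selection lemma, your route unifies disposal and baseline into one mechanism and shows that linear reorganization survives the $\alpha=0,\gamma=1$ setting; the paper's route avoids any new combinatorial lemma by exact cancellation, at the price of an extra gadget type and the divisibility residual $R(x)$.
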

Although Theorem \ref{th:main2} considers a theoretically stronger setting, the additional requirement of $n$ reveals the practical meanings of learnable $\alpha, \gamma$ in reducing the necessary network width to achieve UAP.
Moreover, the result can be generalized to the scenario with pairwise random initialization, which is stated by the following theorem.
\begin{theorem}[UAP for randomly initialized parameters] \label{th:random}
    Given a probability threshold $\delta \in (0,1)$, for any function $f^* \in C([0,1])$ and any small number $\varepsilon>0$, there exists a large integer $n \in \mathbb{Z}^+$, and $\alpha,\gamma \in \mathbb{R}$ for $f^{\text{NN}}$ in Eq.~(\ref{NN}) with randomly initialized $B_{rand}^{(n)} \sim \mathcal{U} [0,1]^n$ and pairwisely randomly initialized $W_{rand}^{(2n)} = ( \! \pm p_i )_{i = 1}^n$, $p_i \sim \mathcal{U} [0,1]$, along with a permuted coefficients $\theta^{(2n)} = \tau(W_{rand}^{(2n)})$, such that with probability $1 - \delta$, $| f^{\text{NN}}(x) - f^*(x) | \le \varepsilon$ for all $x \in [0,1]$.
\end{theorem}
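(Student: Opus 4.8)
The plan is to bootstrap from the deterministic construction behind Theorems~\ref{th:main1} and~\ref{th:main2}, transferring it to the random setting by showing that a random initialization, with high probability, already contains all the ingredients on which the equidistant construction relies. Since Theorem~\ref{th:random} again makes $\alpha,\gamma$ free, I would align the argument with the scaling freedom of Theorem~\ref{th:main1}, so that only relative, not absolute, coefficient values matter. I would split the randomness into its two independent sources---the locations $B_{rand}^{(n)}$ and the coefficient magnitudes $\{p_i\}$---control each by a concentration estimate, and combine them by a union bound to reach total probability $1-\delta$, while budgeting the approximation error into three pieces of size $\varepsilon/3$.

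First I would treat the random locations. A piecewise-constant approximant of $f^*$ only needs a partition of $[0,1]$ that is fine enough: uniform continuity of $f^*$ yields a modulus $\rho=\rho(\varepsilon)$ such that any mesh of width at most $\rho$ supports a piecewise-constant interpolant within $\varepsilon/3$. For $n$ i.i.d.\ uniform points the order statistics $b_{(1)}\le\cdots\le b_{(n)}$ have maximal spacing of order $\tfrac{\log n}{n}$ with probability tending to one, so for $n$ large the event that every gap is below $\rho$ holds with probability at least $1-\delta/2$. On this event the random locations deliver a mesh at least as fine as the equidistant grid used in Theorem~\ref{th:main2}, and the geometric skeleton of the step-function (four-pair) construction can be reproduced on the random grid.

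Next I would treat the random coefficients. The deterministic proof assigns, via the four-pair construction, prescribed coefficient values to prescribed locations and then neutralises the remaining weights through the reorganization method. With $\alpha,\gamma$ free, the overall scale is no longer fixed, so what is required is only that, for each prescribed value, the random pool $\{\pm p_i\}$ contains a sufficiently close match. Partitioning $[0,1]$ into $O(1/\varepsilon)$ bins and applying a concentration bound on the empirical measure of the $p_i$ (for instance a Dvoretzky--Kiefer--Wolfowitz-type estimate), one shows that for $n$ large every bin receives at least the number of samples demanded by the construction with probability at least $1-\delta/2$; the required $\pm$ pairing is guaranteed by the initialization $W_{rand}^{(2n)}=(\pm p_i)_{i=1}^n$. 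Replacing each ideal coefficient by its nearest available sample then perturbs $f_{\text{rand}}^{\text{NN}}$ by at most $\varepsilon/3$, which is tracked through the explicit ramp slopes.

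The main obstacle, and the step demanding the most care, is satisfying the permutation constraint simultaneously with the elimination of the unused weights. A permutation is a bijection, so each of the $2n$ random coefficients must be placed somewhere; those not consumed by the active step functions cannot be discarded and must be reorganised so that their cumulative contribution is either a controllable affine term (absorbed into $\alpha,\gamma$) or mutually cancelling. Carrying out this reorganisation with random rather than exactly paired magnitudes, while respecting the rigid matching between locations and coefficients, is where the combinatorial and analytic difficulty concentrates. I would handle it by grouping the leftover coefficients into $\pm$ pairs placed at coincident or adjacent locations, so that each pair contributes only a linear or negligible term, and then verify that the total residual stays below $\varepsilon/3$ on the high-probability event above; summing the three error budgets gives the claimed accuracy $\varepsilon$ with probability at least $1-\delta$.
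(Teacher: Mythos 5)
Your high-level strategy---transfer the equidistant construction of Theorem~\ref{th:main1} into the random network by showing that, with high probability, the random locations and the random coefficient magnitudes contain close matches to every parameter the deterministic construction needs, then combine the two failure probabilities---is essentially the paper's own route (the paper bounds the matching probability by inclusion--exclusion over the events ``each target location has a random point within $\Delta r$,'' where you use maximal-spacing and DKW estimates; these tools are interchangeable). One quantitative repair is needed in your matching step: bins of width $O(\varepsilon)$ are not fine enough, because the step-matching construction rescales the hidden layer by $\gamma = \Delta h/h \sim \hat n^{2}$, so coefficient and location perturbations are amplified by $\gamma$; the matching tolerance must be the $r_0$ dictated by continuity of the \emph{fixed} finite-width construction (the paper's condition $\Delta r < r_0$), which depends on $\hat n$ and not only on $\varepsilon$. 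That is fixable and not the main issue.

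The genuine gap is in your final step, the neutralization of the leftover weights, which is exactly where the paper locates the difficulty. Two problems. First, a leftover \emph{affine} term cannot be ``absorbed into $\alpha,\gamma$'': $\gamma$ multiplies the entire hidden-layer sum, used part included, so it cannot cancel a stray slope; only the constant part can be moved into $\alpha$. The slope itself must be made small. Second, the pairings you propose do not make it small. Assigning $\pm p_i$ to the two slots at a common location $b_k$ contributes $\pm p_i(x-b_k)$, and the magnitudes $p_i$ are order one, so the summed slope is order one unless the signs are chosen to cancel; assigning $+p_i,-p_i$ to same-sign slots at adjacent locations contributes a ramp of sup-norm $p_i\,|b_{k'}-b_k|$ per pair, but there are $\sim n$ such pairs and the disjoint gaps sum to order one, so the total residual is $O(1)$ (and $O(\hat n^{2})$ after the $\gamma$-scaling), not $o(1)$. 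In every variant the bound you get is the \emph{sum} of gaps, whereas what is needed is a bound by the \emph{maximum} gap. That is precisely the content of the paper's Lemma~\ref{th:Leibniz}: sort the leftover magnitudes, form consecutive differences, and choose signs $m_i\in\{\pm1\}$ by an alternating (Leibniz-type) argument so that $0 \le \sum_i m_i p_i \le \Delta p$, the maximal gap between adjacent leftover coefficients, which tends to zero with high probability as $n$ grows; the intercept is then removed by a constant shift absorbed into $\alpha$. Without this sign-balancing lemma (or an equivalent), your assertion that ``the total residual stays below $\varepsilon/3$'' is not established, and the natural implementations of the pairing you describe fail.
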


\subsection{Proof ideas}
To identify the UAP of our network Eq.~(\ref{NN}) in $C([0,1])$, we employ a piecewise constant function, which is a widely-used continuous function approximator \citep{stein2009real}, and can be expressed as a summation of several step functions. Next, we demonstrate that our networks can approximate each step function.
In this spirit, our constructive proof includes the following three steps (illustrated in Fig.~\ref{fig:main}):
\begin{itemize}
    \item[1.] Approach the target function $f^*$ by a piecewise constant function $g$;
    \item[2.] Approximate each step functions of $g$ by a subnetwork within $f^{\text{NN}}$;
    \item[3.] Annihilate the impact of the remaining parts of $f^{\text{NN}}$.
\end{itemize}

\begin{figure*}[ht]
    \centering
    \includegraphics[height = 4.9cm]{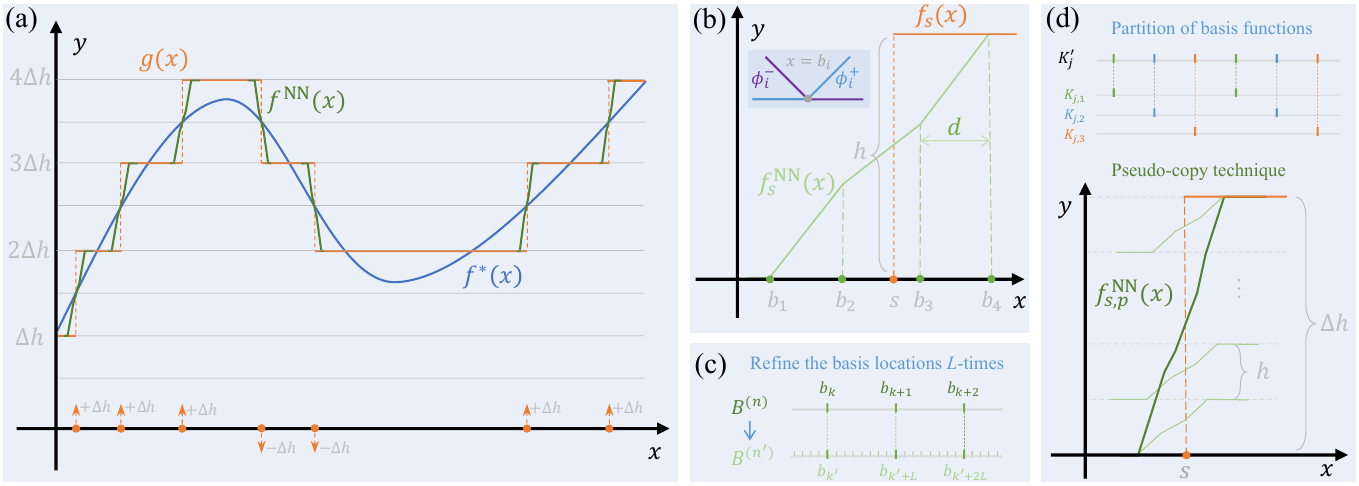}
    \caption{Main idea of the construction. (a) Approximate the continuous function $f^*$ by a piecewise constant function $g$ which is further approximated by permuted networks $f^{\text{NN}}$. (b) The step function approximator $f_s^{\text{NN}}$ constructed by step-matching. (c) Refine the basis functions $L$-times. (d) Stacking pseudo-copies to achieve the desired height.}
    \label{fig:main}
\end{figure*}

Step 1 can be achieved by cutting the range of $f^*$ into subregions with an equal width $\Delta h$, and then constructing a step function at each point where $f^*$ crosses a boundary of these subregions, leading to an approximator with an error $\Delta h$ (illustrated in Fig.~\ref{fig:main}(a)). The statement is outlined below.

\begin{lemma} \label{th:lemma-g}
    For any function $f^* \in C([0,1])$ and any small number $\varepsilon' > 0$, there is a piecewise constant function $g$ with a common jump $\Delta h \le \varepsilon' $, such that $| g(x) - f^*(x) | \le \varepsilon' $ for all $x \in [0,1]$. Moreover, the function $g$ can be written as a summation of $J$ step functions $\{ f_{s_j} \}_{j = 1}^J$ as the following form,
    \begin{equation} \label{eq:g}
        g(x) = \sum_{j = 1}^J a_j f_{s_j}(x)
        =
        \sum_{j = 1}^J a_j \Delta h \chi(x-s_j), \quad a_j = \pm 1, \: s_j \in [0,1], \: J \in \mathbb{Z}^+,
    \end{equation}
    Here $J$ is the step number, $s_j$ is the step location, $a_j$ is the step sign controlling the direction, and $\chi$ is the standard step function
\begin{equation*}
    \chi(x) =
    \left\{ \begin{array}{ll}
        0, & x < 0, \\
        1, & x \ge 0.
        \end{array} \right.
\end{equation*}
\end{lemma}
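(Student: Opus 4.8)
The plan is to reduce the statement to a \emph{vertical quantization} of $f^*$ onto the grid $\{k\,\Delta h : k \in \mathbb{Z}\}$, combined with a \emph{horizontal partition} supplied by uniform continuity. The guiding observation is that any piecewise constant function all of whose values are integer multiples of $\Delta h$ automatically admits the required representation: a jump of size $m\,\Delta h$ is realized by stacking $|m|$ unit steps $\Delta h\,\chi(x-s_j)$ of sign $a_j=\operatorname{sgn}(m)$ at the common location $s_j$. So the target form of Eq.~(\ref{eq:g}) is forced once one produces a grid-valued staircase close to $f^*$.

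First I would fix $\Delta h \le \varepsilon$. Since $[0,1]$ is compact, $f^*$ is uniformly continuous, so there is a $\delta>0$ with $|f^*(x)-f^*(y)|\le \Delta h/2$ whenever $|x-y|\le \delta$. Choose a partition $0=t_0<t_1<\cdots<t_K=1$ with $t_k-t_{k-1}\le\delta$, and on each cell $[t_{k-1},t_k)$ set $g$ equal to the constant $c_k := \Delta h\cdot\operatorname{round}\!\big(f^*(t_{k-1})/\Delta h\big)$, the nearest grid value (with the last cell closed at $1$). For any $x$ in that cell, the triangle inequality gives $|g(x)-f^*(x)|\le |c_k-f^*(t_{k-1})|+|f^*(t_{k-1})-f^*(x)|\le \Delta h/2+\Delta h/2=\Delta h\le\varepsilon$, which is the desired uniform bound.

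Next I would convert this $g$ into the signed-step form. By construction every $c_k$ is an integer multiple of $\Delta h$, so the initial value $c_1$ and each jump $c_k-c_{k-1}$ are integer multiples $m_0\Delta h$ and $m_k\Delta h$. I would place $|m_0|$ steps at $s=0$ and $|m_k|$ steps at $s=t_{k-1}$ (for $k\ge 2$), each of height $\Delta h$ and sign $a_j=\operatorname{sgn}(m_k)$, and collect them into a single list $\{(a_j,s_j)\}_{j=1}^J$. Since there are finitely many cells and each contributes finitely many unit steps, $J$ is finite; because $\chi(\cdot)$ is right-continuous and matches the cell convention, telescoping $\sum_j a_j\Delta h\,\chi(x-s_j)$ reproduces $g$ on all of $[0,1]$, giving exactly Eq.~(\ref{eq:g}).

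The routine parts here are the uniform-continuity estimate and the telescoping; the point that actually needs care is enforcing the \emph{common} jump height $\Delta h$ rather than merely an arbitrary piecewise constant approximant. This is precisely why I quantize the plateau values onto a single grid and why stacking several unit steps at one location must be permitted: a direct nearest-value rounding of $f^*$ itself could in principle cross a grid midpoint infinitely often and produce infinitely many jumps, whereas first freezing $f^*$ to a constant on each of finitely many cells guarantees finitely many jumps, all of magnitude $\Delta h$. Hence the main obstacle is organizational—matching an elementary approximation to the rigid uniform-step format demanded by the later permutation construction—rather than analytic.
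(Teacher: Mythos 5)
Your construction is correct in substance, but it takes a genuinely different route from the paper's. The paper first invokes the Stone--Weierstrass theorem to replace $f^*$ by a polynomial and then builds $g$ \emph{horizontally}: the step locations $s_j$ are the finitely many points where the polynomial crosses the half-grid levels $(k_j+0.5)\Delta h$, with $a_j$ the direction of crossing; finiteness and distinctness of the $s_j$ come from the polynomial reduction. You instead build $g$ \emph{vertically}: uniform continuity supplies a partition into cells on which $f^*$ varies by at most $\Delta h/2$, and you round the plateau value on each cell to the grid $\Delta h\,\mathbb{Z}$, placing steps at cell boundaries. Your route is more elementary (no appeal to Stone--Weierstrass), and your remark that pointwise rounding of a general continuous $f^*$ could cross a half-level infinitely often pinpoints exactly the failure mode that the paper's polynomial reduction exists to prevent. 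Your route also confronts the base level honestly: since Eq.~(\ref{eq:g}) has no constant term, any admissible $g$ satisfies $g(0)=\sum_{j:\,s_j=0}a_j\Delta h$, so matching $f^*(0)$ forces a stack of coincident steps at $s=0$. The paper's level-crossing $g$ vanishes at $x=0$ and hence, as literally written, only approximates targets with $|f^*(0)|\le\varepsilon$; on this point your proposal is the more careful one.

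The one thing you should repair is the \emph{interior} stacking. The lemma's phrase ``common jump $\Delta h$'', and more importantly the way the lemma is consumed in the proof of Theorem~\ref{th:main1} (distinct locations $s_1<\cdots<s_J$ with a positive gap $\delta_s$, and one four-pair subnetwork assigned per step), require that $g$ jump by exactly $\pm\Delta h$ at distinct interior points; a jump of size $2\Delta h$ realized as two coincident unit steps would break that downstream argument. Fortunately your own parameters already rule this out, although you did not observe it: consecutive plateau values satisfy $|c_{k+1}-c_k|\le\tfrac{1}{2}\Delta h+\tfrac{1}{2}\Delta h+\tfrac{1}{2}\Delta h=\tfrac{3}{2}\Delta h$ (rounding error, modulus of continuity, rounding error), and an integer multiple of $\Delta h$ of modulus at most $\tfrac{3}{2}\Delta h$ is $0$ or $\pm\Delta h$. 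Hence no interior stacking ever occurs, every interior jump is a single signed step, and the only coincident steps are the unavoidable ones at $s=0$. Adding this one-line observation makes your $g$ deliver exactly what the rest of the paper needs.
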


\begin{proof}
    The function $g$ can be constructed explicitly. According to the Stone-Weierstrass theorem \citep{stone1948generalized}, we assume $f^*$ to be a polynomial function for simplicity. Let the range of $f^*$ be covered by an interval $[k_{\text{min}} \Delta h, k_{\text{max}} \Delta h]$ with two integers $k_{\text{min}}, k_{\text{max}} \in \mathbb{Z}$. 
    Then denote $J$ as the number of intersections between $f^*$ and parallel lines $y = (k + 0.5) \Delta h$, $k = k_{\text{min}}, k_{\text{min}} + 1, \cdots, k_{\text{max}} - 1$. Since $f^*$ is a polynomial function, $J$ can be assumed to be finite (Otherwise $f^*$ will be a constant function due to the fundamental theorem of algebra \citep{hungerford2012algebra}, making it easily approximated by $g$). The scenario with $J=0$ is also trivial since it indicates that $f^*$ lies in $[(k_0-0.5) \Delta h, (k_0+0.5) \Delta h]$ for some integer $k_0 \in [k_{\text{min}}, k_{\text{max}}]$, such that $f^*$ can be approached by the constant function $y = k_0 \Delta h$ with an error $\Delta h$.

    Hence, for any $j = 1, \cdots, J$, we choose $s_j \in [0,1]$ such that $f^*(s_j) = (k_j+0.5) \Delta h$ for some $k_j \in \mathbb{Z}$. 
    The step locations $s_1 < \cdots < s_J$ are distinct since any repetition would contradict the continuity of the target function $f^*$.
    The step sign $a_j$ is determined according to the values of $f^*$ on $[s_{j-1},s_{j+1}]$. It is easy to verify such construction satisfies our requirements.
\end{proof}

The execution of step 2 is inspired by the divide-and-conquer algorithm in computer science \citep{hopcroft1983data}.
For each step function $f_{s_j}$ in $g$, we select basis functions to construct a step function approximator $f_{s_j}^{\text{NN}}$, then sum them up to approach $g$.
This \emph{step-matching} construction utilizes four pairs of basis functions $\{\pm b_i\}_{i=1}^4$ (shown in Fig.~\ref{fig:main}(b)), and establishing a one-to-one mapping between coefficients and biases, \emph{i.e.}, $\{p_i, q_i\}_{i=1}^4=\{\pm b_i\}_{i=1}^4$. It ensures that each coefficient and location are uniquely assigned and prevents conflict between different approximators.

Step 3 plays a vital role in the proof construction, and serves as an essential distinguishing factor that sets permutation training apart from conventional scenarios.
Note that the specific setting of permutation training poses a crucial challenge that the proof must utilize every parameter, rather than just pick up the desired parameters and discard the rest.
Therefore, it is essential to eliminate the remaining network parameters after step 2 to prevent the potential accumulation of errors.
We solve this problem by proposing a \emph{linear reorganization} to write the remaining part as a linear function with a controllable slope.

To further enhance the conclusion of Theorem \ref{th:main1} to Theorem \ref{th:main2}, we first introduce a technique called \emph{pseudo-copy}, which can achieve UAP without altering $\gamma$ in Eq.~(\ref{NN}). By partitioning the refined basis functions, several pseudo-copies $f_{s, p}^{\text{NN}}$ of the original approximator $f_{s}^{\text{NN}}$ can be constructed with a controllable error. The final height can then be achieved by stacking these copies together rather than changing $\gamma$ (see Fig.~\ref{fig:main}(d)). Additionally, to make the shift factor $\alpha$ removable, our \emph{constant-matching} construction can provide the necessary shifting. It also enables another way to eliminate the remaining part of the network.

Extending the UAP to the random initializations is justified by the fact that
the parameters randomly sampled from uniform distributions become denser, thus approaching the equidistant case.
Therefore, a sufficiently wide network has a high probability of finding a subnetwork that is close enough to the network with UAP in the equidistant case.
Then this subnetwork can also achieve UAP due to its continuity.
The remaining part of the network can be eliminated by step 3.

\section{UAP of permutation-trained networks}
\label{sec:proof}

This section provides a detailed construction of the approximator with weight-permuted networks in the equidistant case, along with an estimation of the convergent rate of approximation error. The extension to the scenario with random initialization is also thoroughly discussed.

\subsection{The construction of step, constant, and linear function approximators}
Lemma~\ref{th:lemma-g} enables us to approximate the target function $f^*$ by a piecewise constant function $g$ in Eq.~(\ref{eq:g}). Next, we aim to approximate each step function $f_{s_j}$ within $g$, respectively, and then eliminate the remaining part of the network. This section introduces several key constructions to achieve this goal in the equidistant case.

\subsubsection{Step-matching construction of step function approximators $f_s^{\text{NN}}$}
Here we construct the step function approximator $f_{s}^{\text{NN}}$ for a given step function $f_s(x) = \Delta h \, \chi(x-s)$ with height $\Delta h$ and location $s$.
The construction considers four pairs of basis functions $\{\phi_i^\pm\}_{i=1}^4$ with locations $\{b_i\}_{i=1}^4$
and coefficients $\{p_i, q_i\}_{i=1}^4=\{\pm b_i\}_{i=1}^4$, which has the following form,
\begin{equation} \label{four_pairs}
    f_{s}^{\text{NN}}(x) = \sum_{i = 1}^4 p_i \phi_i^+ (x) + \sum_{i = 1}^4 q_i \phi_i^- (x), \quad x \in [0,1].
\end{equation}
Here we require the locations $b_1 < b_2 < b_3 < b_4$ satisfy the following symmetric condition,
\begin{equation} \label{eq:d}
    d:=b_2-b_1=b_4-b_3 \, \Longrightarrow \, b_1 + b_4 = b_2 + b_3,
\end{equation}
where $d$ is the basis distance.
To ensure a local error of the approximator, we appeal $f_{s}^{\text{NN}}$ to be $x$-independent outside the interval $[b_1,b_4]$. As a result, the coefficients $p_i,q_i$ must satisfy $\sum_{i = 1}^4 p_i = \sum_{i = 1}^4 q_i = 0$, which implies the correspondence between $\{ p_i, q_i \}_{i = 1}^4$ and $\{ \pm b_i \}_{i = 1}^4$ as 
\begin{equation}\label{eq:coefficients_step}
    \begin{split}
        p_1 = -b_1, \quad p_2 = +b_2, \quad p_3 = +b_3, \quad p_4 = -b_4, \\
        q_1 = +b_4, \quad q_2 = -b_3, \quad q_3 = -b_2, \quad q_4 = +b_1, 
    \end{split}
\end{equation}
We call the $\{ p_i, q_i \}_{i = 1}^4$ and $\{ \pm b_i \}_{i = 1}^4$ is \emph{step-matching} if they satisfy Eq.~(\ref{eq:coefficients_step}), which gives the piecewise form of $f_s^{\text{NN}}$ in Eq.~(\ref{four_pairs}) as
\begin{equation} \label{f_n_piece}
    \begin{split}
        f_s^{\text{NN}}(x) = 
        \left\{ \begin{array}{lr}
            2 b_1 b_4 - 2 b_2 b_3 & 0 \le x < b_1, \vspace{0.4ex} \\
            (- b_1 + b_4)x + b_1^2 + b_1 b_4 - 2 b_2 b_3 & b_1 \le x < b_2, \vspace{0.4ex} \\
            (-2 b_1 + 2 b_2)x + b_1^2 - b_2^2 + b_1 b_4 - b_2 b_3 & b_2 \le x < b_3, \vspace{0.4ex} \\
            (- b_1 + b_4)x + b_1^2 - b_2^2 -b_3^2 + b_1 b_4 & b_3 \le x < b_4, \vspace{0.4ex} \\
            b_1^2 - b_2^2 - b_3^2 + b_4^2 & b_4 \le x \le 1. \text{ }
            \end{array} \right. 
    \end{split}
\end{equation}
The profile of this $f_s^{\text{NN}}$ can be found in Fig.~\ref{fig:main}(b), which shows that $f_s^{\text{NN}}$ is monotone 
and can approach a step function with the height $h$ satisfying the following relation,
\begin{equation} \label{eq:height}
    h = 2(b_1^2 - b_2^2 - b_3^2 + b_4^2) = 4d(b_4-b_2).
\end{equation}
Notice that choosing the basis functions adjacently will lead to $d = \frac{1}{n-1}$ and $h = 8d^2$.
By shifting $h/2$ and scaling $\Delta h/h$, we use $f_s^{\text{NN}}$ to approach step function $f_s$ with $s \in [b_1,b_4]$.
It is obvious that the $L^\infty$ error has the following trivial bound,
\begin{equation} \label{eq:fs_error}
    \left| \frac{\Delta h}{h} \left[ f_s^{\text{NN}}(x) + \frac{h}{2} \right] - f_s(x) \right| \le \Delta h, \quad \forall \, x \in [0,1].
\end{equation} 
Eq.~(\ref{f_n_piece}) implies that the approximation is exactly accurate when $x \notin [b_1, b_4]$. A toy example of this step-matching construction is shown in \ref{app:toy}.

\begin{remark}
     Although \citet{cai2021least} suggested that a step function could be approached by two ReLU basis functions, this approach is unsuitable for the permutation training scenario.
     For an error tolerance $\varepsilon$, a step function $f_s(x) = h_s \chi(x-s)$ can be well approximated by a linear combination of two ReLU basis functions as
     \begin{equation*}
         \tilde{f}_s^{\text{NN}}(x) = \frac{h_s}{2\varepsilon} [\text{ReLU}(x - s + \varepsilon) - \text{ReLU}(x - s - \varepsilon)].
     \end{equation*}
     However, the dependence of the coefficients on the step height $h_s$ hinders further construction since each coefficient can be used only once.
\end{remark}

\subsubsection{Constant-matching construction of constant function approximators $f_c^{\text{NN}}$}

The four-pair form in Eq.~(\ref{four_pairs}) can also be utilized to approximate a constant function, which plays a crucial role in the proof of Theorem~\ref{th:main2}. The constant function approximator (denoted as $f_c^{\text{NN}}$) shares the same form of $f_s^{\text{NN}}$ in Eq.~(\ref{four_pairs}) but differs slightly in the parameter assignment in Eq.~(\ref{eq:coefficients_step}). 
The coefficients $\{p_i, q_i\}_{i = 1}^4$ are set to equalize the height of the two constant pieces $x < b_1$ and $b_4 \le x$, leading to $-\sum_{i = 1}^4 p_i b_i = \sum_{i = 1}^4 q_i b_i$. A possible choice is
\begin{equation} \label{eq:coefficients_constant}
    \begin{split}
        p_1 = -b_1, \quad p_2 = +b_2,  \quad p_3 = +b_3, \quad p_4 = -b_4, \\
        q_1 = +b_1, \quad q_2 = -b_2, \quad q_3 = -b_3, \quad q_4 = +b_4.
    \end{split}
\end{equation}
We call the $\{ p_i, q_i \}_{i = 1}^4$ and $\{ \pm b_i \}_{i = 1}^4$ is \emph{constant-matching} if they satisfy Eq.~(\ref{eq:coefficients_constant}).
With these chosen coefficients $\{ p_i, q_i \}_{i = 1}^4$ and the the symmetry of coefficients in Eq.~(\ref{eq:d}), the constant function approximator $f_c^{\text{NN}}$ can be written as:
\begin{equation} \label{eq:f_c}
    \begin{split}
        f_c^{\text{NN}}(x) & = 
            (b_2 + b_3 - b_1 - b_4) \, x + b_1^2 - b_2^2 - b_3^2 + b_4^2 \\
            & = 2d(b_4 - b_2) = h/2, \quad x \in [0, 1].
    \end{split}
\end{equation}
By the relations of $h$ in Eq.~(\ref{eq:height}), it gives a representation of constant $C = h/2$ without the approximation error, i.e., $| f_{c}^{\text{NN}}(x) - h/2 | = 0$ for all $x \in [0,1]$.

Furthermore, by changing the sign of $\{ p_i, q_i \}_{i = 1}^4$ in Eq.~(\ref{eq:coefficients_constant}) simultaneously, an approximator $f_{-c}^{\text{NN}}$ with a negative constant $-C = -h/2$ can also be constructed:
\begin{equation*} 
        f_{-c}^{\text{NN}}(x) = - 2d(b_4 - b_2) = -h/2, \quad x \in [0, 1].
\end{equation*}
Therefore, we can freely adjust the sign $a_j = \pm 1$ of the approached constant. It also enables us to construct $f_{c}^{\text{NN}}$ and $f_{-c}^{\text{NN}}$ with the same constant separately, then pair them up to offset with each other, \emph{i.e.}, $f_{c}^{\text{NN}}(x) + f_{-c}^{\text{NN}}(x) = 0$ for all $x \in [0,1]$. A toy example of this constant-matching construction is shown in \ref{app:toy}.

\subsubsection{Linear reorganization of the linear function approximators $f_\ell^{\text{NN}}$}
Apart from the four-pair construction discussed before, we can also construct a linear function $f_\ell^{\text{NN}}$ by reorganizing each pair of basis functions. 
Concretely, we choose a pair of basis functions located at $b_i$ and set the coefficients as $\{p_i, q_i\}=\{\pm b_i\}$. The linear function approximator $f_\ell^{\text{NN}}$ can be written as the following form,
\begin{equation} \label{eq:unused_linear}
    f_\ell^{\text{NN}}(x) = m_i \ell_i, \quad \ell_i(x) := b_i \phi_i^+(x) - b_i \phi_i^-(x) = b_i x - b_i^2, \quad x \in [0,1],
\end{equation} 
where $m_i = \pm 1$ is a freely adjusted sign.
We call this method as \emph{linear reorganization}. It allows us to adjust the number of the total basis functions, as the $L^\infty$-norm of $f_\ell^{\text{NN}}$ located at $b_i$ has an upper bound of $b_i^2$. Hence we can choose the unwanted $b_i$ that is small enough, then omit it along with the related coefficients $\{p_i, q_i\}=\{\pm b_i\}$ from the network without affecting the approximation error.

\subsection{Annihilate the unused part of the network} \label{sec:unused}

After constructing approximators to approach the target function, the permutation training setting requires that the remaining parameters be suitably arranged to eliminate their impact.
Notice that a pair of basis functions $\phi_i^{\pm}$ 
are either used together or not at all. We apply the linear reorganization in Eq.~(\ref{eq:unused_linear}) to form them into linear functions $f_{\ell_i}^{\text{NN}}$. 
Concretely, denote $I_{\text{un}} \subset \{1,2,...,n\}$ as the index of unused basis functions and $\bar n$ as the number of elements in $I_{\text{un}}$, which can be processed to be an even number.
After taking a sum of all $i \in I_{\text{un}}$, the resulting $\mathcal{S_\ell}$ has the following linear form,
\begin{equation} \label{eq:minS_l2}
    \mathcal{S_\ell}(x) = \sum_{i \in I_{\text{un}}} f_{\ell_i}^{\text{NN}}(x) = \sum_{i \in I_{\text{un}}} m_i b_i x - \sum_{i \in I_{\text{un}}} m_i b_i^2 =: \beta x + \eta, \quad x \in [0,1],
\end{equation}
where $\beta$ is the slope and $\eta$ is the intercept. The goal then is to control the $L^\infty$-norm of $\mathcal{S_\ell}$. We first choose a proper $m_i \in \{-1, 1\}$ for each $i \in I_{\text{un}}$ to reduce $|\beta|$, which is equivalent to assigning addition and subtraction operations within a given sequence to reduce the final result's absolute value. Motivated by the \emph{Leibniz's test} (known as \emph{alternating series test}) \citep{rudin1953principles}, the following lemma offers a solution with an upper bound of $\beta$.

\begin{lemma} \label{th:Leibniz}
    For an even number $\bar n$ and a sequence of real number $c_i \in \mathbb{R}, i = 1, \cdots, \bar n$,
    there exists a choice of $m_i \in \{ -1, 1\}$, $i = 1, \cdots, \bar n$, such that 
    \begin{align}
        0 \le \sum_{i = 1}^{\bar n} m_i c_i \le \Delta c,
        \quad
        \Delta c = \max_{\substack{1 \le j \le \bar n \\ j \notin \operatorname{argmax} c_j}} \left( \min_{\substack{i \ne j \\ c_i \ge c_j}} (c_i - c_j) \right),
    \end{align}
    where $\Delta c$ can be regarded as the largest gap between the adjacent elements after sorting $c_i$ in descending order (see \ref{app:Leibniz} for a toy example).
\end{lemma}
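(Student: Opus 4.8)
The plan is to reduce the problem to signing a short list of nonnegative numbers and then to a single greedy balancing step. First I would sort the sequence into $c_{(1)} \le c_{(2)} \le \cdots \le c_{(\bar n)}$; since $\bar n$ is even this splits cleanly into the $\bar n/2$ consecutive pairs $(c_{(2\ell-1)}, c_{(2\ell)})$, $\ell = 1, \dots, \bar n/2$, with every element used exactly once. Within the $\ell$-th pair I assign opposite signs, so its contribution to $\sum_i m_i c_i$ is $\pm a_\ell$ with $a_\ell := c_{(2\ell)} - c_{(2\ell-1)} \ge 0$; note each $a_\ell$ is one of the gaps between adjacent sorted elements and hence $a_\ell \le \Delta c$. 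With this reduction the signed sum becomes $\sum_i m_i c_i = \sum_{\ell} \sigma_\ell a_\ell$ for a free choice of pair-signs $\sigma_\ell \in \{-1,+1\}$, and the task collapses to balancing $\bar n/2$ nonnegative numbers each bounded by $\Delta c$.

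The core of the argument is then the elementary balancing claim that for nonnegative reals $a_1, \dots, a_K$ one can choose $\sigma_\ell \in \{-1,+1\}$ with $0 \le \sum_\ell \sigma_\ell a_\ell \le \max_\ell a_\ell$. This is exactly the Leibniz/alternating-series heuristic made quantitative: I would prove it by the greedy rule of adding each $a_\ell$ to whichever of two running totals is currently smaller (equivalently, choosing $\sigma_\ell$ to push the partial sum back toward $0$). A one-line induction shows the difference of the two totals never leaves $[0, M]$ with $M = \max_\ell a_\ell$: if the current difference lies in $[0, M]$ and we add $a_\ell \le M$ to the smaller pile, the new difference equals $|\,\text{diff} - a_\ell\,| \le \max(\text{diff}, a_\ell) \le M$, and it is $\ge 0$ by construction. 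Taking the final larger pile as the positive one gives signs $\sigma_\ell$ with $0 \le \sum_\ell \sigma_\ell a_\ell \le \max_\ell a_\ell \le \Delta c$, and unpacking the pair-signs back into the original $m_i$ yields the statement (if the bookkeeping produced a negative total one simply negates every $m_i$, which preserves the two-sided bound).

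The step I expect to be the main obstacle is recognizing that the naive ``sort and alternate signs'' construction suggested by the Leibniz analogy does \emph{not} work: assigning $m_{(k)} = (-1)^k$ telescopes into $\sum_\ell g_{2\ell-1}$, the sum of the odd-indexed gaps, which can be far larger than any single gap $\Delta c$ (for equally spaced points it grows like $\tfrac{\bar n}{2}$ times the spacing). The decisive idea is therefore not to alternate along the whole sorted list but to first collapse each adjacent pair into a single gap value $a_\ell$ and only then balance those $\bar n/2$ values; this confines the accumulation to one gap rather than a sum of gaps. A secondary point worth stating explicitly is where evenness of $\bar n$ enters: it guarantees the consecutive pairing leaves no unmatched element, which is essential since the permutation setting forces every coefficient, and hence every $c_i$, to be assigned a sign.
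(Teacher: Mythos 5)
Your proof is correct, and its decisive reduction --- sorting the $c_i$, pairing adjacent elements, and collapsing each pair into a nonnegative gap bounded by $\Delta c$ --- is exactly the paper's: there the sequence is sorted in descending order and split into consecutive pairs with differences $r_j = c_{2j-1} - c_{2j} \le \Delta c$. Where you diverge is the balancing step. The paper sorts the gaps themselves into descending order $r'_1 \ge r'_2 \ge \cdots \ge r'_{\tilde n} \ge 0$ and assigns strictly alternating signs $\lambda_j = (-1)^{j-1}$; the two groupings $(r'_1 - r'_2) + (r'_3 - r'_4) + \cdots \ge 0$ and $r'_1 - (r'_2 - r'_3) - (r'_4 - r'_5) - \cdots \le r'_1$ then give $0 \le \sum_j \lambda_j r'_j \le r'_1 \le \Delta c$, which is the Leibniz-test argument the lemma is named for. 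You instead run a greedy two-pile argument, adding each gap to the smaller running total and checking by induction that the difference of the piles stays in $[0, \max_\ell a_\ell]$. Both finishes are valid and yield the same bound; yours is marginally more elementary in that it requires no second sort and works online, while the paper's makes the connection to alternating series explicit. Your closing remark also matches the paper's own caution: the paper notes in a remark that applying the alternating-sign idea directly to the $c_i$ (rather than to the pairwise gaps) gives only a trivial bound, which is precisely the failure mode you identified.
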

\begin{proof}
    Without loss of generality, we assume $c_1 \ge c_2 \ge \cdots \ge c_{\bar n} \ge 0$ and thus $\Delta c = \max_{i} | c_{i} - c_{i-1}|$.
    Since $\bar n$ is an even number, we define a new series $\{r_j\}_{j = 1}^{\bar n / 2}$ as the difference between each pair of elements in $\{c_i\}_{i = 1}^{\bar n}$,
    \begin{equation*}
        r_j = c_{2j-1} - c_{2j} \ge 0, \quad j = 1, 2, \cdots, \bar n / 2.
    \end{equation*}
    Then we permute $( r_j )_{j = 1}^{\bar n / 2}$ to be descending order. Concretely, we find a permutation $\tau$ and denote $r'_j = \tau(r_j) = [\tau(c_{2j-1}) - \tau(c_{2j})]$, such that $r'_1 \ge r'_2 \ge \cdots \ge r'_{\bar n / 2} \ge 0$.
    Next, we alternatively use addition and subtraction operations on each $r_j'$ by noting $\lambda_j=(-1)^{j-1}$, $j = 1, \cdots, \bar n / 2$, and estimate the summation $\mathcal{S}_{\bar n / 2}$ as the following forms,
    \begin{equation*}
        \mathcal{S}_{\bar n / 2} :=
        \sum_{j = 1}^{\bar n / 2} \lambda_j r'_j
        =
        \left\{ \begin{array}{l}
                (r_1' - r_2') + (r_3' - r_4') +  \cdots  \ge 0,\\
                r_1' - (r_2' - r_3') - (r_4'-r_5') - \cdots  \le r_1' \le \Delta c.
        \end{array} \right. 
    \end{equation*}
    Note that $\mathcal{S}_{\bar n / 2} = \sum_{j = 1}^{\bar n / 2} \lambda_j [\tau(c_{2j-1}) - \tau(c_{2j})]$ is of the form of $\sum_{i = 1}^{\bar n} m_i c_i$, hence the choice of $m_i$ implied by $\lambda_j$ satisfies our requirement and the proof is finished 
\end{proof}

\begin{remark}
    Leibniz's test guarantees that an alternating series that decreases in absolute value is bounded by the leading term. However, we emphasize that utilizing Leibniz's test directly to $\{c_i\}_{i = 1}^{\bar n}$ can only get a trivial bound. Therefore, the introduction of $r_j$ is important to get the desired result.
\end{remark}

After applying Lemma~\ref{th:Leibniz}
to control the slope $\beta$, the intercept $\eta$ is consequently determined by the chosen $\{m_i\}_{i \in I_{\text{un}}}$. Thus, we can choose a constant to adjust the intercept, which establishes an upper bound for the error $S_{\ell}$ of the remaining part.

\subsection{Proof~of~Theorem \ref{th:main1}}

Lemma \ref{th:lemma-g} offers a piecewise constant function $g$ to approach the target function $f^*$. Now we prove that by permuting the selected coefficients and eliminating the remaining part, $f^{\text{NN}}$ can approximate piecewise constant function $g$ with the same accuracy, enabling us to prove the Theorem \ref{th:main1}. 

\begin{proof}[Proof~of~Theorem \ref{th:main1}]
    For any target function $f^* \in C([0,1])$ and small number $\varepsilon$, we aim to construct a network $f^{\text{NN}}$ with $n$ pairs of basis functions to approximate $f^*$ with error $\varepsilon$.
    The goal is achieved with the following points: 

\textbf{a)}  Approach $f^*$ by a piecewise constant function $g$.
Employing Lemma \ref{th:lemma-g} by letting $\varepsilon' = \varepsilon/4$, we can construct a piecewise constant function $g$ in Eq.~(\ref{eq:g}) with a constant height $\Delta h \le \varepsilon' = \varepsilon/4$, distinct step locations $s_1 < \cdots < s_J$.
It gives $|g(x)-f^*(x)|\le \Delta h < \varepsilon/2$ for all $x\in [0,1]$.
Here we denote $\delta_s := \max_j |s_j-s_{j-1}|$ as the maximal gap of step locations, where $j = 0, 1, \cdots, J+1$ and $s_0 = 0$, $s_{J+1} = 1$.

\textbf{b)}  Approximate each step function in $g$ by the step-matching approximator $f_{s_j}^{\text{NN}}$ in Eq.~(\ref{four_pairs}).
Since the step locations $\{ s_j \}_{j = 1}^J$ are distinct,
we can choose a network $f^{\text{NN}}$ with large enough $\hat n$, \emph{i.e.}, the locations in $B^{(\hat n)}$ are dense enough, such that there are enough basis functions to construct $f_{s_j}^{\text{NN}}$ for each $a_j f_{s_j}$, respectively. 
In fact, for any $\hat n > {8}/{\delta_s} + 1$, the distance between basis functions $\hat d = 1/(\hat n-1)$ satisfies $\hat d < \delta_s / 8$. 

However, to maintain the consistency of the following estimation, we refine the basis locations $B^{(\hat n)}$ to $B^{(n)}$ with $n = L (\hat n - 1) + 1$ for some integer $L \ge \Delta h (\hat n - 1)^2 / 8$ (see Fig.~\ref{fig:main}(c)). 
Denote $K = \{1, \cdots, n\}$ as the index set of the locations $B^{(n)}$, and for each $j = 1, \cdots, J$, we choose the basis functions with the index $K_{j} :=\{k_j, k_j + L, k_j + 2L, k_j+ 3L \} \subset K$, such that $s_j \in [b_{k_j+L}, b_{k_j+2L})$ and $\{ K_{j} \}_{j = 1}^J$ has empty intersection.
Therefore, for each $a_j f_{s_j}$, an approximator $ (f_{s_j}^{\text{NN}} + a_j h/2)$ with a shifting $a_j h/2$ can be constructed by applying the step-matching construction on $\{ p_{k}, q_{k} \}_{k \in K_j}$. This approximation has the following error estimation given by Eq.~(\ref{eq:fs_error}):
\begin{equation} \label{th1-1}
    \left|  \left[ f_{s_j}^{\text{NN}}(x) + a_j \frac{h}{2} \right] - a_j \frac{h}{\Delta h} f_{s_j}(x) \right|
    \le h, \quad \forall \, x \in [0,1], \quad j = 1, \cdots, J,
\end{equation}
where $h=8 \hat d^2 = 8/(\hat n-1)^2 = 8L^2 / (n - 1)^2$ is the height determined by $\hat n$, and the scaling $h/\Delta h$ serves to match the constant height $\Delta h$ and its approximator height $h$. Our requirement of $L$ gives the condition $L \ge \Delta h/ h$. 
Furthermore, denote $K_{\text{use}} = \cup_{j=1}^J K_{j}$ as the index set for all involved basis functions in Eq.~(\ref{th1-1}), which leads to the requirement of $\hat n > 4J$. We define $\mathcal{S}_{\text{use}}$ as the picked subnetwork of $f^{\text{NN}}$,
\begin{equation*}
    \mathcal{S}_{\text{use}}(x) := 
    \sum_{k \in K_{\text{use}}} \left[ p_k \phi_k^+ (x) + q_k \phi_k^- (x) \right]
    =
    \sum_{j=1}^J f_{s_j}^{\text{NN}}(x), \quad x \in [0,1],
\end{equation*}
then the properties of our step-matching construction and Eq.~(\ref{th1-1}) imply the error $E_{\text{use}}$ of the summed approximators can be estimated as the following form,
\begin{equation} \label{eq:use_error}
    \begin{split}
        E_{\text{use}}
    := &\, 
    \max_{x \in [0,1]} \left| \mathcal{S}_{\text{use}}(x) + \frac{h}{2} \sum_{j = 1}^{J} a_j - \frac{h}{\Delta h} g(x) \right|
    \\= &\,
    \max_{x \in [0,1]} \left| \sum_{j=1}^J \left[ f_{s_j, p_l}^{\text{NN}}(x) + a_j \frac{h}{2} \right] - \sum_{j=1}^{J} \frac{h}{\Delta h} a_j f_{s_j}(x) \right|
    \le h.
    \end{split}
\end{equation}
Here $\sum_{j = 1}^{J} a_j =: J' \le J$ is a constant, and height $h$ satisfies $h = 8L^2 / (n - 1)^2$.

\textbf{c)}  Annihilate the impact of the unused part.
The linear reorganization in Eq.~(\ref{eq:unused_linear}) enables write the unused part in $f^{\text{NN}}$ into a linear function $\mathcal{S_{\text{un}}}$, namely,
\begin{equation*}
    \mathcal{S_{\text{un}}}(x) = \sum_{k \in K \setminus K_{\text{use}}} \left[ p_k \phi_k^+ (x) + q_k \phi_k^- (x) \right] = \sum_{k \in K \setminus K_{\text{use}}} (m_k b_k x - m_k b_k^2) =: \beta x + \eta,
\end{equation*}
Since the number of unused basis functions 
can be processed to be even, we apply Lemma \ref{th:Leibniz} on the series $ \{ b_{k} \}_{k \in K \setminus K_{\text{use}}}$ to get a choice of $m_{k} \in \{-1, 1\}$ for each $k \in K \setminus K_{\text{use}}$, which provides an upper bound of the slope $\beta$ as $0 \le \beta \le \Delta b$,
where $\Delta b$ is the largest gap between the adjacent locations in $\{b_{k}\}_{k \in K \setminus K_{\text{use}}}$. 
Notice that in $f^{\text{NN}}$, the refined basis distance $d < 1/L \hat n$ is small enough to ensure that there is at least one unused basis function between two adjacent used basis functions, \emph{i.e.}, $\Delta b \le 2/L \hat n$.
To control the error of the intercept $\eta$, we introduce a shifting $C_\eta = -\eta$, thus the error $E_{\text{un}}$ introduced by the unused part gives the following estimation, 
\begin{equation} \label{eq:minEqui}
    E_{\text{un}} := \max_{x \in [0,1]} \big| \mathcal{S_{\text{un}}}(x) + C_\eta \big| \le \Delta b < \frac{2}{L \hat n} \le \frac{2 h}{\Delta h \hat n}.
\end{equation}
Therefore, we choose $\hat n > 2 / \Delta h$ such that the error $E_{\text{un}}$ satisfies $E_{\text{un}} \le h$.

\textbf{d)} Complete the proof by choosing the suitable values of $\gamma$, $\alpha$.
We choose $\hat n$ and two factors $\gamma$, $\alpha$, such that
\begin{equation*}
    \hat n \ge \max \left\{ 4J, \: \frac{8}{\delta_s} + 1, \: \frac{2}{\Delta h} \right\}, \quad \gamma = \frac{\Delta h}{h}, \quad \alpha = \Delta h \left( \frac{J'}{2} + \frac{C_\eta}{h} \right).
\end{equation*}
Moreover, Let $n = L (\hat n - 1) + 1$ for some integer $L \ge \Delta h (\hat n - 1)^2 / 8$, 
then Eq.~(\ref{eq:minEqui})-(\ref{eq:pc_error}) implies that the approximation error of the network $f^{\text{NN}}$ with width $n$ satisfies
\begin{equation*} 
    \begin{split}
    \left| f^{\text{NN}}(x) - g(x) \right| 
    & = \left| \gamma \sum_{k \in K} \left[ p_k \phi_k^+ (x) + q_k \phi_k^- (x) \right] + \alpha - g(x) \right| \\
    & \le \left| \frac{\Delta h}{h} \big[ \mathcal{S}_{\text{use}}(x) + \mathcal{S}_{\text{un}}(x) \big] + \left( \frac{\Delta h}{2} J' + \frac{\Delta h}{h} C_\eta \right) - g(x) \right| \\
    & \le \frac{\Delta h}{h} \left| \mathcal{S}_{\text{use}}(x) + \frac{h}{2} J' - \frac{h}{\Delta h} g(x) \right| + \frac{\Delta h}{h} \big| \mathcal{S}_{\text{un}}(x) + C_\eta \big| \\
    & \le \frac{\Delta h}{h} (E_{\text{use}} + E_{\text{un}}) \le 2 \Delta h \le \varepsilon / 2, \quad \forall \, x \in [0, 1].
    \end{split}
\end{equation*}
This complete the proof since $|f^{\text{NN}}(x)-f^*(x)|\le \varepsilon$ for all $x\in [0,1]$.
\end{proof}

\subsection{Proof~of~Theorem \ref{th:main2}}  \label{sec:without_scaling}

Next, we remove scaling $\gamma$ and shifting $\alpha$ during the proof, \emph{i.e.,} achieve UAP with fixed factors $\gamma=1,\alpha=0$.
To eliminate the scaling $\gamma$,
we introduce the \emph{pseudo-copy} technique to let $\gamma=L$ for an integer $L \in \mathbb{Z}^+$, allowing copy the approximator $L$-times and stack them (instead of multiplying by $\gamma$) to match the desired height. 
The shifting $\alpha$ can be replaced by our constant-matching construction in Eq.~(\ref{eq:f_c}), which is also applied to eliminate the remaining parameters, since our linear reorganization in Eq.~(\ref{eq:unused_linear}) requires a shifting $C_\eta$ to control the error.

\begin{proof}[Proof~of~Theorem \ref{th:main2}]

For a given target function $f^* \in C([0,1])$, we first apply Theorem \ref{th:main1} to obtain a network $f^{\text{NN}}$ to approximate $f^*$, and then enhance the construction by pseudo-copy technique and constant-matching construction to eliminate the scaling $\gamma$ and shifting $\alpha$, leading to a network $f^{\text{NN}}$ in Eq.~(\ref{NN}) with fixed factors $\gamma=1,\alpha=0$. To facilitate distinction, we will denote this network equipped with pseudo-copy technique as $f_{\text{pc}}^{\text{NN}}$ in the following text.

\textbf{a)}  Approximate the target function $f^*$ by a network $f^{\text{NN}}$ with learnable factors $\gamma$ and $\alpha$.
By applying Theorem \ref{th:main1}, we construct $f^{\text{NN}}$ to approximate $f^*$ through a piecewise constant function $g$ with error $\varepsilon$. 
Following the previous discussion, we have $\Delta h \le \varepsilon / 8$, $\gamma = \Delta h / h$. During the construction, $L$ are chosen to be $L = \gamma$.

\textbf{b)}  Remove the scaling $\gamma$ by the pseudo-copy technique.
We first reassign the basis functions in $f^{\text{NN}}$ to construct the pseudo-copies of each approximator $f_{s_j}^{\text{NN}}$.
For each $K_{j} =\{k_j, k_j + L, k_j + 2L, k_j + 3L \} \subset K$, we denote the corresponding adjacent index set for pseudo-copy as $K_j' := \{k_j, k_j+1, \cdots, k_j + 4L-1\}$ such that $K_j \subset K_j'$.
To maintain the same height of each pseudo-copy, we partition $K_j'$ into $L$ subsets as $K_j' = \cup_{l=1}^L K_{j,l}$ by choosing after every $L$ indexes
(illustrated at the top of Fig.~\ref{fig:main}(d)).

For each $l = 1, \cdots, L$, we apply the step-matching construction on $\{ p_k, q_k \}_{k \in K_{j,l}}$ to construct the pseudo-copy $f_{s_j, p_l}^{\text{NN}}$ for $f_{s_j}^{\text{NN}}$ in Eq.~(\ref{th1-1}), respectively. 
Since each $f_{s_j, p_l}^{\text{NN}}$ has the same height $h$ with $f_{s_j}^{\text{NN}}$, we have the copy error as $ | f_{s_j, p_l}^{\text{NN}}(x) - f_{s_j}^{\text{NN}}(x) | < h$ for all $x \in [0,1]$. Therefore, Eq.~(\ref{th1-1}) allows the summed $f_{s_j, p_l}^{\text{NN}}$ to approximate $a_j f_{s_j}$ with the following error:
\begin{equation} \label{eq:pseudo-copy}
    \begin{split}
        & \left| \sum_{l = 1}^L \left[ f_{s_j, p_l}^{\text{NN}}(x) + a_j \frac{h}{2} \right] -  a_j f_{s_j}(x) \right| \\
        \le \, & 
        \sum_{l = 1}^L \left| f_{s_j, p_l}^{\text{NN}}(x) - f_{s_j}^{\text{NN}}(x) \right| + \left| L \left[ f_{s_j}^{\text{NN}}(x) + a_j \frac{h}{2} \right] - a_j f_{s_j}(x) \right| \\
        \le \, & 
        L h + \Delta h = 2 \Delta  h, \quad \forall \, x \in [0, 1], \quad j = 1, \cdots, J.
    \end{split}
\end{equation}
Here $a_j h / 2$ is the shifting required by the pseudo-copies. 

\textbf{c)}  Replace the shifting with constant-matching construction.
After constructing the pseudo-copies $f_{s_j, p_l}^{\text{NN}}$, our constant-matching construction in Eq.~(\ref{eq:f_c}) can pair them up with the necessary shifting $f_{c_j}^{\text{NN}} = a_j h / 2$, enabling the combined $\sum_{l  =1}^L (f_{s_j, p_l}^{\text{NN}} + f_{c_j}^{\text{NN}})$ to approach $a_j f_{s_j}$. 
Denote $K_{\text{use}}'$ as the index set for all involved basis functions. We define $\mathcal{S}_{\text{use}}'$ as the picked subnetwork of $f_{\text{pc}}^{\text{NN}}$,
\begin{equation*}
    \mathcal{S}_{\text{use}}'(x) := 
    \sum_{k \in K_{\text{use}}'} \left[ p_k \phi_k^+ (x) + q_k \phi_k^- (x) \right]
    =
    \sum_{l = 1}^L \sum_{j=1}^J \left[ f_{s_j, p_l}^{\text{NN}}(x) + f_{c_j}^{\text{NN}}(x) \right].
\end{equation*}
where $x \in [0,1]$. Eq.~(\ref{eq:pseudo-copy}) implies the following error estimation,
\begin{equation} \label{eq:pc_error}
    \begin{split}
        E_{\text{use}}'
    := &\, 
    \max_{x \in [0,1]} \left| \mathcal{S}_{\text{use}}'(x) - g(x) \right|
    \\= &\,
    \max_{x \in [0,1]} \left| \sum_{j=1}^J \sum_{l = 1}^L \left[ f_{s_j, p_l}^{\text{NN}}(x) + f_{c_j}^{\text{NN}}(x) \right] - \sum_{j=1}^{J} a_j f_{s_j}(x) \right|
    \le 2 \Delta h.
    \end{split}
\end{equation}
This construction uses $8L$ basis locations, leading to the requirement as $n > 8LJ$. 

\textbf{d)}  Eliminate the unused part of the network. 
Since the linear reorganization in Eq.~(\ref{eq:unused_linear}) is invalid,
we turn to apply the constant-matching construction in Eq.~(\ref{eq:f_c})
 to eliminate the unused part of the network.
Concretely, for the remaining $n - 8JL$ pairs of basis functions in $f_{\text{pc}}^{\text{NN}}$,
we construct $f_{\pm c_t}^{\text{NN}}$, $t = 1, \cdots, \lfloor n/8 \rfloor - JL$, then pairing them up to offset with each other, \emph{i.e.}, 
\begin{equation*}
    \mathcal{S_{\text{un}}'}(x) = \sum_{k \in K \setminus K_{\text{use}}'} \left[ p_k \phi_k^+ (x) + q_k \phi_k^- (x) \right] = \sum_{t = 1}^{\lfloor n/8 \rfloor - JL} \left[ f_{c_t}^{\text{NN}}(x) + f_{-c_t}^{\text{NN}}(x) \right] + R(x),
\end{equation*}
Here $R(x)$ is the residual part since $n$ is not always divisible by 8. However, our linear reorganization in Eq.~(\ref{eq:unused_linear}) enables omit the basis functions with sufficiently small $b_{i}$.
Concretely, the $L^\infty$ error introduced by $R(x)$ can be controlled as $|R(x)| \le \sum_{k = 1}^4 \bar b_{k}^2 =: C_R$ for all $x \in [0,1]$, where $\{\bar b_{k}\}_{k = 1}^4$ is the missed or external basis locations. Since the constant-matching construction has flexibility,
$\{\bar b_{k}\}_{k = 1}^4$ can be small enough to ensure $C_R \le \Delta h$.
Hence, the error $E_{\text{un}}'$ introduced by the unused parameters satisfies 
\begin{equation} \label{eq:E_un'}
    E_{\text{un}}' = \max_{x \in [0,1]} \left| 
        \mathcal{S_{\text{un}}'}(x) \right| = C_R + C_c
         \le 2 \Delta h.
\end{equation}
Here the constant $C_c$ comes from the possible mismatch in pairing up $f_{c_t}^{\text{NN}} + f_{-c_t}^{\text{NN}}$.
However, for a sufficiently wide network, $C_c$ can be small enough as $C_c < \Delta h$.

\textbf{e)}  Complete the proof by combining the previous estimation.
By setting $\gamma=1,\alpha=0$ and choosing a sufficiently large $L \in \mathbb{Z}^+$, such that 
\begin{equation*}
    n = \sqrt{\frac{8L^3}{\Delta h}} + 1 > \max \left\{ 8JL, \: \frac{8L}{\delta_s} + L, \: \frac{2L}{\Delta h} \right\}.
\end{equation*}
Eq.~(\ref{eq:pc_error})-(\ref{eq:E_un'}) gives estimation of the overall approximation error as follows:
 \begin{equation*} 
    \begin{split}
    \left| f_{\text{pc}}^{\text{NN}}(x) - g(x) \right| 
    & = \left| \sum_{k \in K} \left[ p_k \phi_k^+ (x) + q_k \phi_k^- (x) \right] - g(x) \right| \\
    & \le \left| \mathcal{S}_{\text{use}}'(x) - g(x) \right| +  \left| \mathcal{S}_{\text{un}}'(x) \right| \\
    & \le E_{\text{use}}' + E_{\text{un}}' \le 4 \Delta h, \quad \forall \, x \in [0, 1].
    \end{split}
\end{equation*}
where $\Delta h$ is chosen to satisfy $\Delta h \le \varepsilon / 8$. Hence we can finish the proof of Theorem \ref{th:main2} since $|f_{\text{pc}}^{\text{NN}}(x)-f^*(x)|\le \varepsilon$ for all $x\in [0,1]$.
\end{proof}

\subsection{Estimate the approximation rate} \label{sec:error_rate}

Here we estimate the approximation rate of the $L^2$-error $e_{s}$ of approximating single step function $f_s$ in Eq.~(\ref{eq:g}) by the approximator $f_s^{\text{NN}}$ in Eq.~(\ref{f_n_piece}). 
In our four-pair construction, we assume $s=(b_2+b_3)/2$ and introduce $k_1$ and $k_2$ to rewrite the symmetry relations in Eq.~(\ref{eq:d}) as: 
\begin{equation} \label{symmetry}
    \begin{split}
        b_1 = s - k_2, \quad b_3 = s + k_1, \\
        b_2 = s - k_1, \quad b_4 = s + k_2,
    \end{split}
\end{equation}
where $0 < k_1 \le k_2$, and it also gives $d = k_2 - k_1$. The piecewise form of step function approximator $f_s^{\text{NN}}$ in Eq.~(\ref{f_n_piece}) enables us to subdivide the error into parts like 
\begin{equation} \label{e_s}
    \begin{split}
        e_s^2 = & \int_{0}^1 \left| \gamma \left[ f_s^{\text{NN}}(x) + \frac{h}{2} \right] - \frac{h}{\Delta h} f_s(x) \right|^2 \dif x \\
        = & \, \gamma^2 \left[ \int_{b_1}^{s} \left| f_s^{\text{NN}}(x) + \frac{h}{2} \right|^2 \dif x + \int_{s}^{b_4} \left| f_s^{\text{NN}}(x) - \frac{h}{2} \right|^2 \dif x \right].
    \end{split}
\end{equation}
After calculation, the error of each approximator $f_s^{\text{NN}}$ can be estimated like
\begin{equation} \label{e_s^2}
    \begin{split}
        e_s^2 = \gamma^2 \left[ \frac{8}{3} (k_1 - k_2)^2 \left( k_1^3 + 3 k_1^2 k_2 + 2 k_1 k_2^2 + k_2^3 \right) \right] \le \gamma^2 \frac{56}{3} d^2 k_2^3.
    \end{split}
\end{equation}
During the construction of $f_s^{\text{NN}}$, the basis functions are chosen adjacently, leading to $k_2 \sim \mathcal{O}(d)$ and $e_s \sim \mathcal{O}(\gamma \, d^{\frac{5}{2}})$. To estimate the order of $\gamma$, we derive from Eq.~(\ref{eq:height}) that the height $h$ gives $h \sim \mathcal{O}(d^2)$, while the step function $f_s$ has a $d$-independent height $\Delta h \sim \mathcal{O}(1)$. Therefore, the scaling $\gamma = \Delta h / h \sim \mathcal{O}(d^{-2})$ is needed, and the error is rewritten as $e_s \sim \mathcal{O}(d^{\frac{1}{2}})$. Recall that $d$ in Eq.~(\ref{eq:d}) has $d \sim \mathcal{O}(\frac{1}{n-1})$, we have $e_s \sim \mathcal{O}(n^{-\frac{1}{2}})$, which means the approximation rate is roughly $1/2$ order with respect to the network width $n$. We will numerically verify this rate in Section~\ref{sec:experiments}.

This estimation also holds for our pseudo-copy $f_{s_j, p_l}^{\text{NN}}$, where $\gamma = L$. The triangle inequality is adopted to estimate the overall approximation error of these stacked $\{ f_{s, p_l}^{\text{NN}} \}_{l = 1}^L$ to a predetermined step function $f_s$, \emph{i.e.},
\begin{equation} \label{e_s_stacked}
    e_{s,p} = \left\| \sum_{l=1}^{L}  \left( f_{s, p_l}^{\text{NN}} + \frac{h}{2} \right) - f_s \right\|_{L^2} \le \sum_{l=1}^{L} \left\| \left( f_{s, p_l}^{\text{NN}} + \frac{h}{2M} \right) - \frac{1}{L} f_s \right\|_{L^2} =: \sum_{l=1}^{L} e_{s, p_l}.
\end{equation}
Now we focus on the approximation error $e_{s, p_l}$ of each $f_{s, p_l}^{\text{NN}}$ to the $f_s / L$. However, the result in Eq.~(\ref{e_s^2}) cannot be directly adopted since it only holds for the symmetry case in Eq.~(\ref{symmetry}).
Instead, we choose locations $\{ \tilde{b}_i \}_{i = 1}^4$ \emph{almost} symmetrically with mismatch measured by $\Delta s_l$. Therefore, the relation in Eq.~(\ref{symmetry}) is transformed into
\begin{equation} \label{b_mismatch}
	\begin{split}
        \tilde{b}_1 = (s + \Delta s_l) - k_2, \quad \tilde{b}_3 = (s + \Delta s_l) + k_1, \\ 
        \tilde{b}_2 = (s + \Delta s_l) - k_1, \quad \tilde{b}_4 = (s + \Delta s_l) + k_2.
    \end{split}
\end{equation}
Compared with Eq.~(\ref{symmetry}), it's clear that this transformation is equivalent to replacing $f_{s, p_l}^{\text{NN}}(x)$ with $f_{s}^{ \text{NN}}(x - \Delta s_l)$ for all $x \in [0,1]$. Therefore, each $e_{s, p_l}$ in Eq.~(\ref{e_s_stacked}) gives
\begin{equation} \label{e_stacked_each}
	\begin{split}
        e_{s, p_l}^2 = & \int_{0}^1 \left| \left[ f_{s_i}^{ \text{NN}}(x - \Delta s_l) + \frac{h}{2} \right] - \frac{1}{L} f_s(x) \right|^2 \dif x \\
        = & \int_{b_1 + \Delta s_l}^{s} \left| f_{s_i}^{\text{NN}}(x - \Delta s_l) + \frac{h}{2} \right|^2 \dif x + \int_{s}^{b_4 + \Delta s_l} \left| f_{s_i}^{\text{NN}}(x - \Delta s_l) - \frac{h}{2} \right|^2 \dif x,       
    \end{split}
\end{equation}
where the integral range $[b_1 + \Delta s_l, \, b_4 + \Delta s_l]$ is not symmetrical about $x = s$ due to the mismatch. However, since $\Delta s_l$ is small, we assume that $b_2 + \Delta s_l \le s \le b_3 + \Delta s_l$, then follow the similar procedure used in Eq.~(\ref{e_s}) to divide the error in Eq.~(\ref{e_stacked_each}) into parts. 
After calculation, we obtain the following estimation
\begin{equation*}
    \begin{split}
        e_{s, p_l}^2 
        = \, & \frac{8}{3} (k_1 - k_2)^2 \left[ k_1^3 + 3 k_1^2 k_2 + 2 k_1 k_2^2 + k_2^3 + 3 \Delta s_l^2 (k_1 + k_2) \right] \\
        = \, & \frac{8}{3} d^2 \left( - d^3 + 6 d^2 k_2 - 11 d k_2^2 + 7 k_2^3 - 3 d \, \Delta s_l^2 + 6 k_2 \, \Delta s_l^2 \right).
    \end{split}
\end{equation*}
Since $\Delta s_l$ can be assumed to be small as $\Delta s_l \sim \mathcal{O}(d)$, we obtain a similar estimation $e_{s, p_l} \sim \mathcal{O}(d^{\frac{5}{2}})$. Since the number of stacked pseudo-copy satisfies $L = \gamma \sim \mathcal{O}(d^{-2})$, the same estimation $e_{s,p} = L e_{s, p_l} \sim \mathcal{O}(n^{-\frac{1}{2}})$ is achieved. 

\subsection{Proof~of~Theorem \ref{th:random}} \label{sec:random}
Extending our results of permutation-trained networks to the random initialization scenarios imposes non-trivial challenges. Note that the symmetry construction of the step function approximator in Eq.~(\ref{four_pairs}) becomes invalid with random initializations, as the error introduced by randomness can accumulate as width increases.
Nevertheless, the randomly sampled parameters will become more dense upon increasing width, leading to a high probability of finding parameters that closely match the required location.

Therefore, we can first apply the UAP in the equidistant case to obtain a network $f^{\text{NN}}$ in Eq.~(\ref{NN}) (denoted as $f_{\text{equi}}^{\text{NN}}$ in the following text for distinction), which exhibits approximation power. Then, within a randomly initialized network $f^{\text{NN}}$ in Eq.~(\ref{NN}) (denoted as $f_{\text{rand}}^{\text{NN}}$) of sufficient width, we find a subnetwork $f_{\text{sub}}^{\text{NN}}$ that can be regarded as randomly perturbed from $f_{\text{equi}}^{\text{NN}}$. If this perturbation is small enough, the subnetwork $f_{\text{sub}}^{\text{NN}}$ will also possess approximation power.

\begin{proof}[Proof~of~Theorem \ref{th:random}]
We divide the whole proof into the following points:

\textbf{a)}  Approach $f^*$ with equidistantly initialized $f_{\text{equi}}^{\text{NN}}$.
Theorem \ref{th:main1} indicate that for any small $\varepsilon$ and the target function $f^*$, there is an equidistantly initialized network $f_{\text{equi}}^{\text{NN}}$ in Eq.~(\ref{NN}) with width $\tilde n$ and $B_{\text{equi}}^{(\tilde n)} = ( b_i )_{i = 1}^{\tilde n}$, $W_{\text{equi}}^{(2\tilde n)} = ( \pm b_i )_{i = 1}^{\tilde n}$, such that 
\begin{equation} \label{eq:f_equi}
    \left| f_{\text{equi}}^{\text{NN}}(x) - f^*(x) \right| < \varepsilon / 4, \quad \forall \, x \in [0,1].
\end{equation}

\textbf{b)}  Find a subnetwork $f_{\text{sub}}^{\text{NN}}$ in a randomly initialized $f_{\text{rand}}^{\text{NN}}$ to approximate $f_{\text{equi}}^{\text{NN}}$.
For a network with sufficiently wide $n \gg \tilde n$, parameters $B_{\text{rand}}^{(n)} \sim \mathcal{U} [0,1]^n$ and $W_{\text{rand}}^{(2n)} = ( \! \pm p_i )_{i = 1}^n$, $p_i \sim \mathcal{U} [0,1]$, we can find a subnetwork $f_{\text{sub}}^{\text{NN}}$ with parameters 
\begin{equation*}
    B_{\text{sub}}^{(\tilde n)} = \left( b_i + r_i^B \right)_{i = 1}^{\tilde n}, \quad W_{\text{sub}}^{(2 \tilde n)} = \big( \pm \left( b_i + r_i^W \right) \big)_{i = 1}^{2 \tilde n},
\end{equation*}
which can be viewed as randomly perturbed from $B_{\text{equi}}^{(\tilde n)}$, $W_{\text{equi}}^{(2\tilde n)}$, while the independent and identically distributed (i.i.d.) $r_i^B \sim \mathcal{U} [-\Delta r, \Delta r]^{\tilde n}$ and $r_i^W \sim \mathcal{U} [-\Delta r, \Delta r]^{2\tilde n}$ measured the perturbation, and $\Delta r > 0$ are the maximum allowable perturbation of $B_{\text{equi}}^{(\tilde n)}$ and $W_{\text{equi}}^{(2\tilde n)}$
we impose further constraints on the parameters near the boundary of $[0,1]$ to prevent them from exceeding the range).
Consequently, for sufficiently small $\Delta r < r_0$, the subnetwork $f_{\text{sub}}^{\text{NN}}$ will approach $f_{\text{equi}}^{\text{NN}}$ with the approximation error like
\begin{equation} \label{eq:f_sub}
    \left| f_{\text{sub}}^{\text{NN}}(x) - f_{\text{equi}}^{\text{NN}}(x) \right| < \varepsilon / 4, \quad \forall \, x \in [0,1], \quad \text{ with probability } P_{\text{sub}}.
\end{equation}
where $P_{\text{sub}}$ denote the related possibility.
This also enables $f_{\text{sub}}^{\text{NN}}$ the approximation power to $f^*$ due to its continuity with respect to the parameters. Combining the results in Eq.~(\ref{eq:f_equi})-(\ref{eq:f_sub}), the approximation error $E_{\text{rand}}^{\,\text{sub}}$ of the subnetwork gives that, with probability $P_{\text{sub}}$,
\begin{equation} \label{eq:E_sr}
    \begin{split}
        E_{\text{rand}}^{\,\text{sub}} := & \,
    \max_{x \in [0,1]}
    \left| f_{\text{sub}}^{\text{NN}}(x) - f^*(x) \right| \\
    \le & \, \max_{x \in [0,1]} \left| f_{\text{sub}}^{\text{NN}}(x) - f_{\text{equi}}^{\text{NN}}(x) \right| + \max_{x \in [0,1]} \left| f_{\text{equi}}^{\text{NN}}(x) - f^*(x) \right| < \varepsilon / 2.
    \end{split}
\end{equation}

\textbf{c)}  Estimate the probability $P_{\text{sub}}$ related to the approximation power of $f_{\text{sub}}^{\text{NN}}$.
Here we estimate the probability of finding such a subnetwork $f_{\text{sub}}^{\text{NN}}$ with the required approximation error $E_{\text{rand}}^{\,\text{sub}}$ in Eq.~(\ref{eq:E_sr}).
We start with the complement event $\mathcal{A}_k^c$: for a given location $\hat b_k \in B_\text{equi}^{(\tilde n)}$, there is no close enough locations in $B_{\text{rand}}^{(n)}$, \emph{i.e.}, falls in the interval $[\hat b_k - \Delta r, \hat b_k + \Delta r]$. 
Concretely, the probability has $\mathbb{P}[\mathcal{A}_k^c] = (1-2 \Delta r)^{n}$ since the interval length is $2 \Delta r$.
Hence, by choosing $\Delta r$ small enough such that these intervals have no overlap, \emph{i.e.}, $\Delta r < \min \{\delta_s, 1/2\tilde n\}$,
we apply the inclusion-exclusion principle \citep{feller1968introduction} to write the probability of finding all locations $B_\text{equi}^{(\tilde n)}$ in $B_{\text{rand}}^{(n)}$ as
\begin{equation*}
    \begin{split}
        & \mathbb{P} \left[ \,\bigcap_{k = 1}^{\tilde n} \mathcal{A}_k \,\right] = 1 - \mathbb{P} \left[ \, \bigcup_{k = 1}^{\tilde n} \mathcal{A}_k^c \, \right] = 1 - \sum_{k=1}^{\tilde n} (-1)^{k+1} \binom{\tilde n}{k} \left(1 - 2 k \Delta r \right)^n =: 1 - P'.
    \end{split}
\end{equation*}
where $\binom{\tilde n}{k}$ is the binomial coefficient, and $P'$ is the complement probability.
This probability also holds for finding $W_\text{equi}^{(2 \tilde n)}$ in pairwise $W_{\text{rand}}^{(2n)}$.
Consequently, we can find a subnetwork $f_{\text{sub}}^{\text{NN}}$ within $f_{\text{rand}}^{\text{NN}}$ that is close enough to $f_{\text{equi}}^{\text{NN}}$, such that the approximation error $E_{\text{rand}}^{\,\text{sub}}$ in Eq.~(\ref{eq:E_sr}) is achieved with the probability of 
\begin{equation} \label{eq:probability}
    P_{\text{sub}} = \mathbb{P} \big[ E_{\text{rand}}^{\,\text{sub}} < \varepsilon / 2 \big] = \left[ 1 - P' \right]^{2}.
\end{equation}
For given $\tilde n$ and $\Delta r$, we have $P' \to 0$ as $n \to \infty$. Therefore, we choose a sufficiently large $n$ to ensure that the probability $P_{\text{sub}} \ge \sqrt{1 - \delta}$.

\textbf{d)}  Annihilate the remaining part in the randomly initialized $f_{\text{rand}}^{\text{NN}}$.
We follow the same discussion as in the equidistant case to eliminate the unused parameters in $f_{\text{rand}}^{\text{NN}}$. 
By applying linear reorganization and Lemma~\ref{th:Leibniz}, we rewrite the remaining part as a linear function $\mathcal{S}_{\text{un}}^r(x) = \beta_r x + \eta_r$, 
where $x \in [0,1]$ and $0 \le \beta_r \le \Delta p$.
The upper bound $\Delta p$ is the largest gap between the adjacent coefficients $\{ p_i \}_{i \in I_{\text{un}}}$. 
As the network width $n$ increases, the randomly initialized $\{ p_i \}_{i \in I_{\text{un}}}$ become denser, leading to $\Delta p \xrightarrow{\text{a.s.}} 0$. Therefore, we can choose a sufficiently large $n$ to ensure $\Delta p \le \varepsilon / 2$ with high probability, \emph{i.e.}, $P_{\text{un}} \ge \sqrt{1-\delta}$, where $P_{\text{un}}$ denotes the related possibility.

Similarly to Eq.~(\ref{eq:minEqui}), we introduce an additional shift $C_r = -\eta_r$ to control the error of the intercept $\eta_r$. 
Therefore, the error $E_{\text{rand}}^{\,\text{un}}$ introduced by the unused part in $f_{\text{rand}}^{\text{NN}}$ satisfies can be estimated similarly with Eq.~(\ref{eq:minEqui}) as the following form,
\begin{equation} \label{eq:E_ur}
    E_{\text{rand}}^{\,\text{un}} = \max_{x \in [0,1]} \left| \mathcal{S}_{\text{un}}^r(x) + C_r \right| \le \Delta p \le \varepsilon / 2, \quad \text{ with probability } P_{\text{un}} > \sqrt{1-\delta}.
\end{equation}

\textbf{e)}  Complete the proof by combining the previous estimation. 
For any $\varepsilon > 0$ and $f^* \in C([0,1])$, we choose $\Delta r < \min \{r_0, \, \delta_s, \, 1/2 \tilde n \}$ and a large $n$ to ensure that 
    \begin{enumerate}
        \item With probability $P_{\text{sub}}$, the subnetwork $f_{\text{sub}}^{\text{NN}}$ can approximate the $f^*$ with the error $E_{\text{rand}}^{\,\text{sub}}$ satisfies $E_{\text{rand}}^{\,\text{sub}} < \varepsilon / 2 $ based on Eq.~(\ref{eq:E_sr});
        \item The probability $P_{\text{sub}}$ satisfies $P_{\text{sub}} \ge \sqrt{1 - \delta}$ based on Eq.~(\ref{eq:probability});
        \item With probability $P_{\text{un}} > \sqrt{1-\delta}$, the impact of the remaining parameters satisfies $E_{\text{rand}}^{\,\text{un}} < \varepsilon / 2$ based on Eq.~(\ref{eq:E_ur}).
    \end{enumerate}
    Hence, we can finish the proof by estimating the overall approximation error of our network $f_{\text{rand}}^{\text{NN}}$ to the target function $f^*$, which gives that with probability $1 - \delta$,
    \begin{equation} 
        \left| f_{\text{rand}}^{\text{NN}}(x) - f^*(x) \right|
        \le E_{\text{rand}}^{\,\text{sub}} + E_{\text{rand}}^{\,\text{un}} < \varepsilon, \quad \forall \, x \in [0,1].
    \end{equation}
\end{proof}

\section{Experiments}
\label{sec:experiments}

This section presents numerical evidence to support and validate the theoretical proof.
An interesting observation of permutation behaviors also highlights the theoretical potential of this method.

\subsection{The algorithmic implementation of permutation training}

In the implementation of permutation training, guidance is crucial in finding the weights' ideal order relationship.
\citet{qiu2020train} proposed \emph{lookahead permutation (LaPerm)} algorithm by introducing an $k$-times Adam-based free updating, where the learned relationship can then serve as a reference for permutation.
To ensure the performance, the weights are permuted after every $k$ epoch.
The impact of $k$'s value on convergence behavior is evaluated to be negligible (see \ref{app:k}).
Apart from the fixed permutation period $k$, it's also possible to adjust $k$ to learn sufficient information for the next permutation. Refer to \ref{app:algorithm} for more details about the LaPerm algorithm.

\subsection{Experimental settings}

To validate our theoretical results, we conduct experiments on regression problems. The settings are deliberately chosen to be consistent with our proof construction. We consider a three-layer network in Eq.~(\ref{NN}), where the first hidden layer's parameters are fixed to form the basis functions $\{\phi_i^\pm\}_{i = 1}^n$ in Eq.~(\ref{basis}). The weights $\theta^{(2n)}$ of the second hidden layer are trained by permutation, while the scaling factors $\alpha, \gamma$ in the output layer are freely trained to reduce the required network width. All the experiments below are repeated 10 times with different random seeds, and the error bars mark the range of the maximum and minimum values.
Refer to \ref{app:experiment} for the detailed experimental environment and setting.

\subsection{Approximating the one-dimensional continuous functions}

We utilize a $1$-$2n$-$1$-$1$ network architecture with equidistant initialization discussed in Theorem \ref{th:main1}, pairwise random initializations in Theorem \ref{th:random}, and also totally random initialization $W^{(2n)} \sim \mathcal{U}[-1, 1]^{2n}$.
The approximation targets are sine function $y = -\sin(2\pi x)$ and 3-order Legendre polynomial $y = \frac{1}{2} (5 x^3 - 3 x)$, $x \in [-1,1]$.

The numerical result illustrated in Fig.~\ref{fig:1D} exhibits a clear convergence behavior of equidistant and pairwise random cases upon increasing $n$, agreeing with our theoretical proof.
The clear dominance of pairwise random initialization indicates its undiscovered advantages in permutation training scenarios.
Besides, the total random case also shows a certain degree of approximation power.
However, in order to attain the equivalent accuracy, the total random case requires a wider network (\emph{i.e.}, a larger $n$).
Furthermore, the $L^\infty$ error exhibits a 1/2 convergence rate with respect to $n$. Although the theoretical estimation in Section~\ref{sec:proof} is based on $L^2$ norm, we indeed observe that it also holds for $L^\infty$ error.

\begin{figure*}[t]
    \begin{center}
    \begin{overpic}[height=4.6cm, clip=true,tics=10]{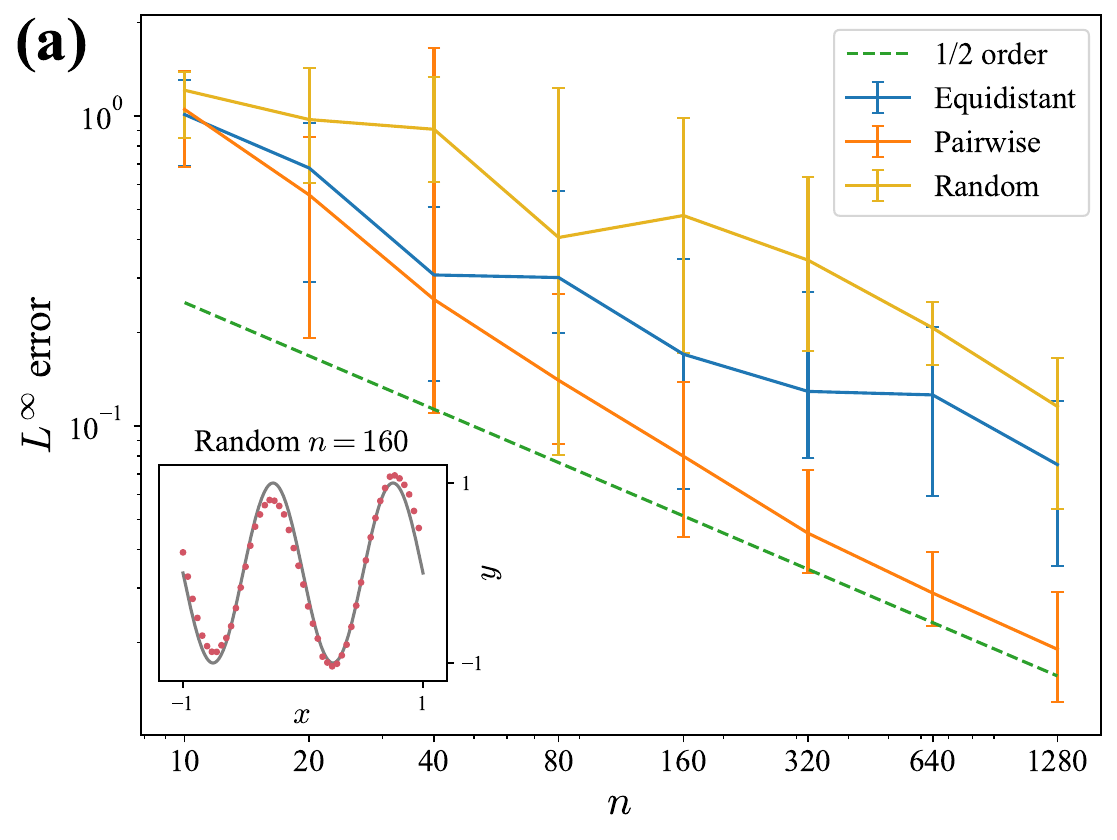}
    \end{overpic}\hspace{10pt}
    \begin{overpic}[height=4.6cm, clip=true,tics=10]{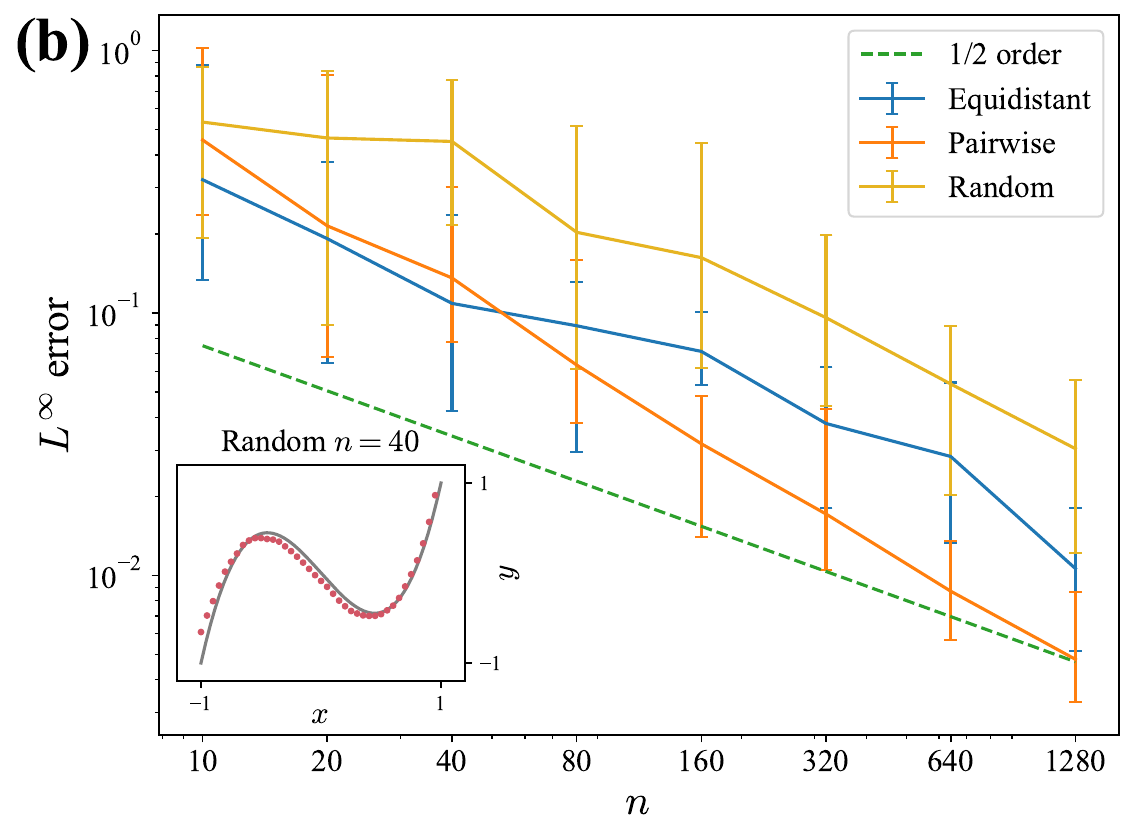}
    \end{overpic}
    \end{center}
    \caption{Approximating one-dimensional continuous function (a): $y = -\sin(2\pi x)$ and (b): $y = \frac{1}{2} (5 x^3 - 3 x)$ with equidistantly, pairwise random, and randomly initialized network, where $x \in [-1, 1]$. The inset in each panel presents the target function as lines and an example of the approximation result as dots.}
    \label{fig:1D}
\end{figure*}

\subsection{Approximating the multi-dimensional continuous functions} \label{app:2D}
As a natural extension, we consider a two-dimensional functions $z = - \sin \pi xy$, where $(x,y) \in [-1,1]^2$, starting with the construction of basis functions like $\phi_i^\pm (x)$ in Eq.~(\ref{basis}). Recall that in the one-dimensional case, the two subsets of basis $\phi_i^\pm (x)$ correspond the two opposite directions along the $x$-axis.
Therefore, at least four directions are required to represent a function defined on the $xy$-plane, of which two are parallel to the $x$-axis as $\phi_{i}^\pm (x, \cdot) = \text{ReLU} (\pm (x - b_i))$ and $\phi_{j}^\pm (\cdot, y) = \text{ReLU} (\pm (y - b_j))$ for $y$-axis, respectively.
Furthermore, inspired by the lattice Boltzmann method in fluid mechanics \citep{chen1998lattice}, we introduce another four directions as $\psi_{k}^{\pm \pm}(x,y) = \text{ReLU} (\pm x \pm y - b_k)$. So the whole basis functions are divided into eight subsets, each corresponding to a different direction (see Fig.~\ref{fig:2D}(b)).
Also, the range of biases is essential since the distribution of $\psi_{k}^{\pm \pm}(x,y)$ must be at least $\sqrt{2}$-times wider to cover the entire domain. Here we set the biases to range in varying directions with a uniform scaling factor, providing flexibility in dealing with the unwanted coefficients.

\begin{figure}[t]
    \begin{center}
    \begin{overpic}[height=5.2cm, clip=true,tics=10]{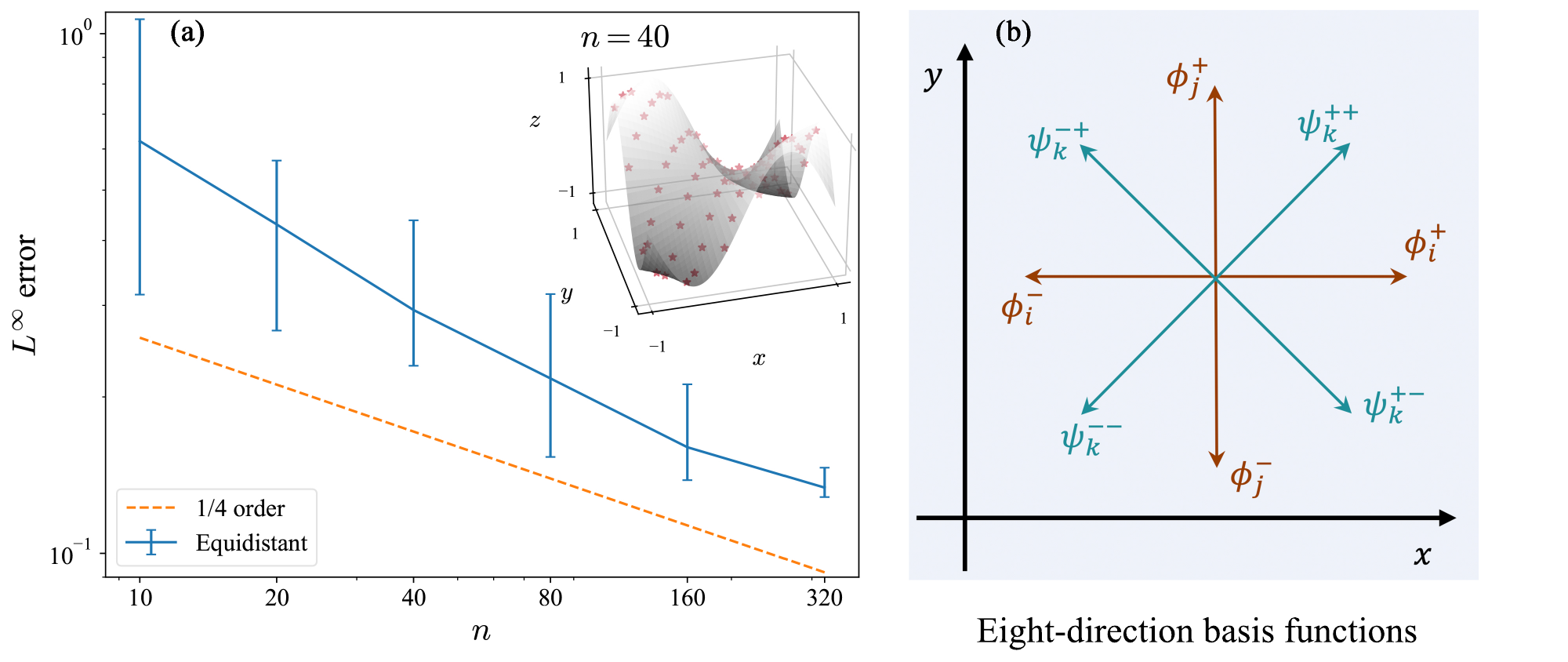}
    \end{overpic}
    \end{center}
    \caption{(a) Approximating two-dimensional continuous function $z = - \sin \pi xy$, where $x, y \in [-1, 1] \times [-1, 1]$. The inset panel presents the target function surface and an example of the approximation result as dots. (b) The two-dimensional basis function settings.}
    \label{fig:2D}
\end{figure}

Accordingly, we utilize a $2$-$8n$-$1$-$1$ network architecture and follow the same setting as before (refer to \ref{app:experiment}). The results depicted in Fig.~\ref{fig:2D}(a) also show good approximation power.
However, the $1/2$ convergence rate in previous cases cannot be attained here.
We hypothesize that this is due to our preliminary eight-direction setting of the basis functions.
This degeneration indicates the challenge of extending our theoretical results to higher dimensions. Further research will address this difficulty by considering more appropriate high-dimensional basis function setups. 
One possible approach relies on the basis construction utilized in the finite element methods \citep{brenner2008mathematical}. However, adopting such a method to meet the permutation training scenario is non-trivial and requires further investigation.

Moreover, the mismatch between the existing implementations and the permutation setting poses numerical challenges to permutation training in higher dimensions.
The performances are significantly affected by the algorithm implementations and initialization settings, both of which need further investigation and are beyond the scope of this work.
We hope our work can inspire and motivate the development of more sophisticated implementations specific to permutation training. However, the permutation training, as a numerical algorithm, can be directly applied to high-dimensional cases, even if it requires a significantly larger network width.

Using a similar numerical setting, we can also approximate functions with three-dimensional inputs. Here we consider $f(x,y,z) = \sin 3x \cdot \cos y \cdot \sin 2z$, where $(x,y,z) \in [-1,1]^3$. The results plotted in Fig.~\ref{fig:3D} demonstrate a certain degree of approximation power (due to the computational cost's limitation, we only conduct the experiments once). However, a degeneration convergence rate from $1/2$ to $1/6$ also indicates the theoretical limitations of the current construction.

\begin{figure}[t]
    \begin{center}
    \begin{overpic}[height=4cm, clip=true,tics=10]{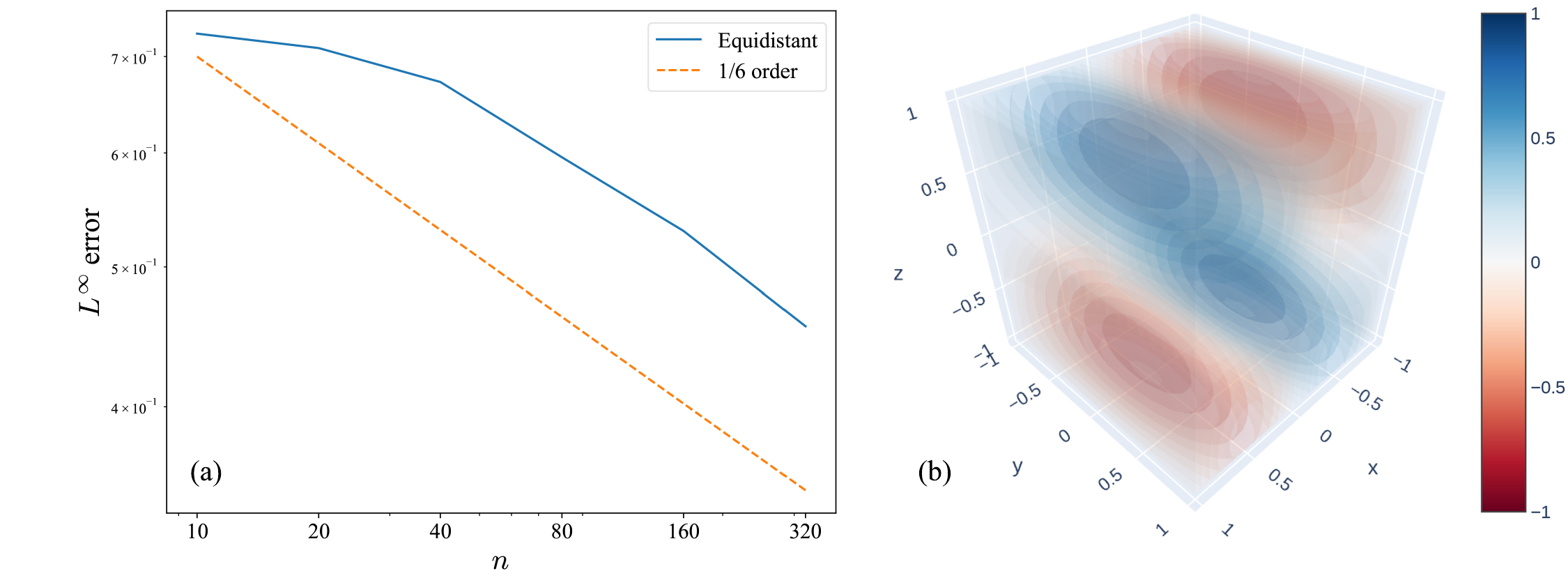}
    \end{overpic}
    \end{center}
    \caption{Approximating three-dimensional continuous function $f(x,y,z) = \sin 3x \cdot \cos y \cdot \sin 2z$, where $(x,y,z) \in [-1,1]^3$. (a) The convergence behavior under random seed 2022. (b) The three-dimensional illustration of the target function, where the function value $f(x,y,z)$ is plotted by the corresponding color in the color bar.}
    \label{fig:3D}
\end{figure}

\subsection{The influence of various initialization strategies} \label{sec:RandomExperiments}

Here we explore the effect of different initialization choices on the approximation behavior, which holds greater significance in permutation training scenarios due to the preservation of the initialized values.
The case in Fig.~\ref{fig:1D}(a) is utilized to apply various random initialization strategies.
The results plotted in Fig.~\ref{fig:initials} show that the UAP of permutation-trained networks is not limited in the setting considered by our previous theoretical investigation.
For more generalized cases, we first consider randomly initializing only $W^{(2n)}$ and $B^{(n)}$, which are labeled as Random 3 and 4, respectively. Both cases demonstrate competitive or even superior accuracy.

\begin{figure*}[t]
    \begin{center}
    \begin{overpic}[height=5.1cm, clip=true,tics=10]{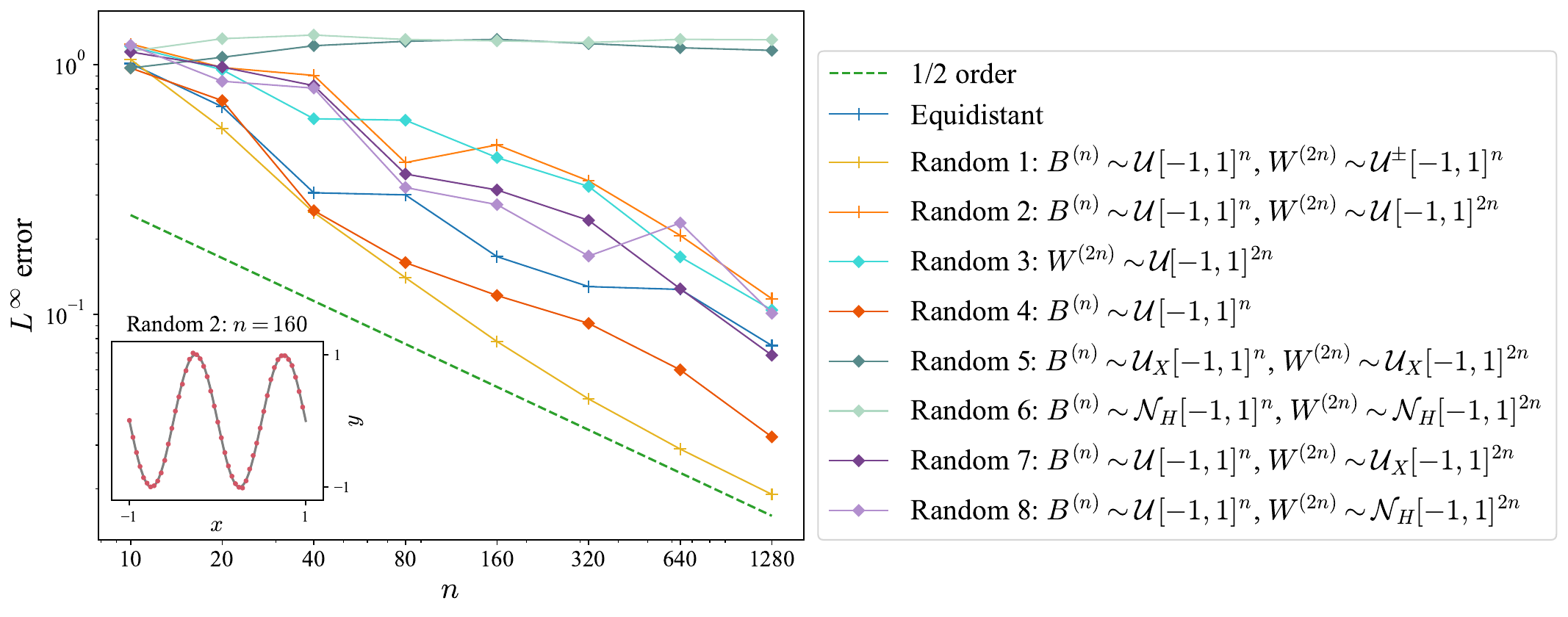}
    \end{overpic}
    \end{center}
    \caption{The performance of different initialization strategies in  approximating $y = -\sin(2\pi x)$ in $[-1, 1]$. The pairwise initialization $W^{(2n)} = ( \pm p_i )_{i = 1}^n, \, p_i \sim \mathcal{U}[-1,1]$ is denoted as $W^{(2n)} \sim \mathcal{U}^{\pm}[0,1]^n$. The error bars are omitted for conciseness. The inset panel presents the target function as lines and an example of the approximation result as dots.}
    \label{fig:initials}
\end{figure*}

Next, we consider some commonly used initializations, such as Xavier's uniform initialization $U_X$ \citep{glorot2010understanding}, and He's normal initialization $N_H$ \citep{he2015delving}. However, the implementation of $U_X$ on Random 5 and $N_H$ on Random 6 fails to result in convergence. This abnormal poor performance may be attributed to the mismatch of the scale in $B^{(n)}$ and the approximation domain $[0,1]$. To verify our hypothesis, we hold $B^{(n)} \sim \mathcal{U} [-1, 1]^n$, and only apply $U_X$ and $N_H$ on the coefficients $W^{(2n)}$ in Random 7 and 8. Consequently, both cases successfully regain the approximation ability.
These surprising behaviors, especially the unexpected deterioration of the default choices,
emphasizes the limited understanding of systematical characterizing the initialization suitable for permutation training scenarios.

\subsection{Observation of the permutation-active patterns} \label{sec:active}

This section aims to explore the theoretical potential of permutation training in describing network learning behavior.
Based on the significant correlation between permutation and learning behavior, as evidenced by \citet{qiu2020train} and our investigation, we hypothesize that the permutation-active components of the weights may play a crucial role in the training process.
Therefore, by identifying and tracing the permutation-active part of weights, a novel description tool can be achieved, which also facilitates visualization and statistical analysis of the learning behavior.

\begin{figure*}[t]
    \begin{center}
    \begin{overpic}[height=5.3cm, clip=true]{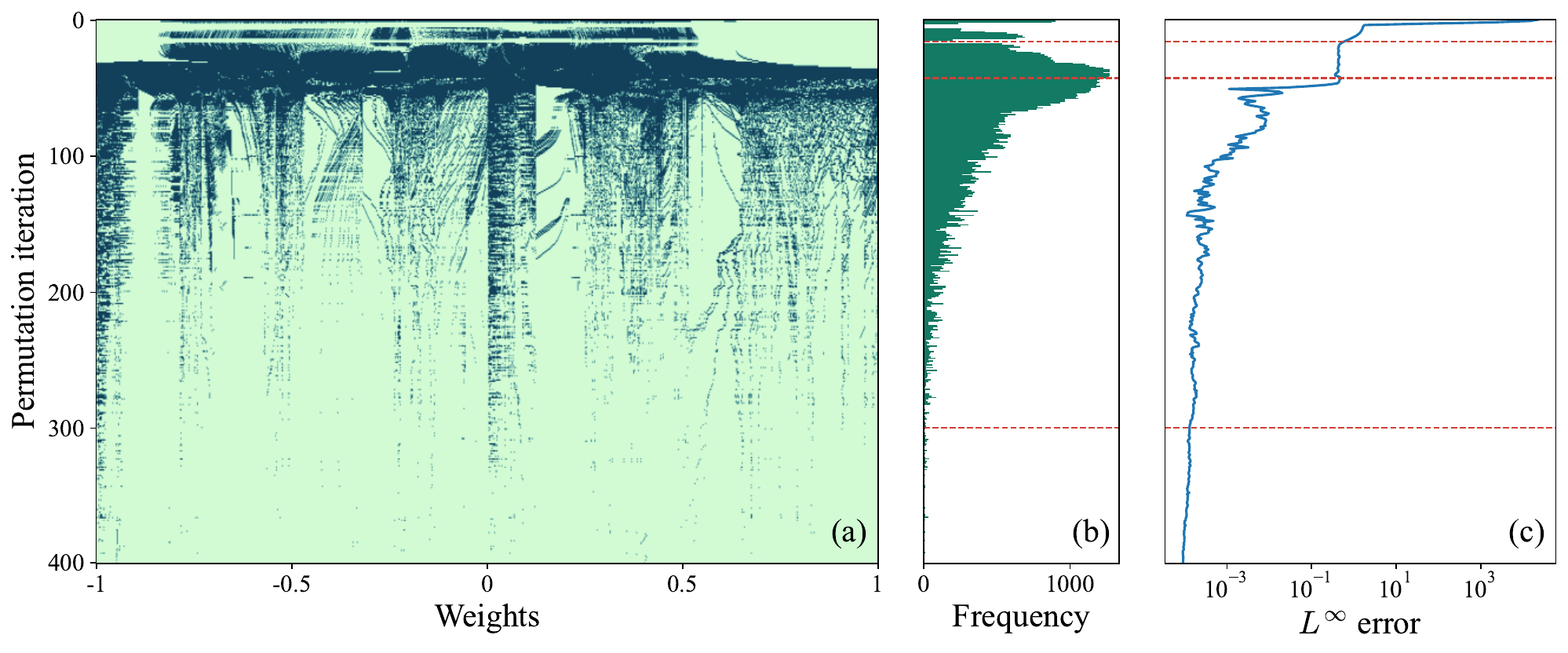}
    \end{overpic}
    \end{center}
    \caption{The permutation behavior in the first 400 permutation iteration in approximating $y = -\sin(2\pi x)$ by equidistantly initialized network with $n = 640$. (a) The distribution of the active components (denoted by dark green color).
    (b) The frequency distribution illustrates the variation in the total count of active components in each permutation.
    (c) The corresponding loss behavior. }
    \label{fig:IsPermu}
\end{figure*}

As a preliminary attempt, we illustrate the permutation behavior of the coefficients $\theta^{(2n)}$ in Fig.~\ref{fig:1D}(a). The components that participated in the permutation are visually highlighted in dark green in Fig.~\ref{fig:IsPermu}(a).
The behaviors plotted in Fig.~\ref{fig:IsPermu}(b)-(c) clearly show that the frequency of order relationship exchange evolves synchronously with the learning process, agreeing with the observation of \citet{qiu2020train}.

Specifically, the distribution of active components shows significant patterns, which are classified into four stages (marked by red dash lines in Fig.~\ref{fig:IsPermu}).
The loss declines sharply in the initial stage, while only the components with medium value are permuted.
Once loss reaches a plateau in the second stage, more components are involved in permutation, evidencing the role of permutation in propelling the training.
As loss starts to decline again, the permutation frequency correspondingly diminishes.
Interestingly, the slower loss decrease gives rise to a ribbon-like pattern, akin to the reported localized permutations (plotted in Fig.~14 of the appendix in \citet{qiu2020train}), and possibly due to slow updates failing to trigger a permutation.
This observation may support the existence of inherent low-dimensional structures within the permutation training dynamics, potentially linked to mathematical depiction of permutation groups, such as cycle decomposition \citep{cameron1999permutation} and Fourier bases for permutation \citep{huang2009fourier}.
Finally, the permutation's saturation aligns with the stationary state of loss convergence.
We believe these inspiring phenomena deserve further exploration, as they hold the potential to provide novel insights regarding the learning behavior of networks.

\section{Discussion and Future Work}
\label{sec:discussion}

Despite the exploratory nature as the first theoretical work (to our knowledge), our findings suggest the valuable potential of permutation training. Some intriguing questions are also raised for future research.

\subsection{Generalizing to other scenarios}
Although we mainly focus on basic settings, the proof idea exhibits generalizability.
Extending to networks equipped with leaky-ReLU can be straightforward (refer to \ref{app:leakyReLU} for numerical evidence). 
Our proof also enables implementations within other designs, such as networks with deeper architectures or sparse initializations (see \ref{app:extending} for detailed discussion).

However, extending our theoretical results to the high-dimensional scenario still faces challenges.
One of the primary obstacles is constructing multi-dimensional basis functions that are suitable for the permutation training scenario.
A reasonable approach relies on the construction in the finite element methods \citep{brenner2008mathematical}. 
We plan to address this problem in future work. 
As for the numerical evidence of high-dimensional scenarios, \citet{qiu2020train} have examined the classification problem using VGG-based networks on the CIFAR-10 dataset, while our experiments in regression task have shown approximation behavior for two and three-dimensional inputs (see Section~\ref{app:2D}).

\subsection{Permutation training as a theoretical tool}

Our observation in Sec.~\ref{sec:active} indicates the theoretical potential of permutation training, as it corresponds well with the training process and has systematical mathematical descriptions.
Specifically, the patterns observed in Fig. \ref{fig:IsPermu} can intuitively lead to certain weight categorization strategies, potentially benefit consolidating the crucial weights for previous tasks \citep{maltoni2019continuous}, or pruning to find the ideal subnetwork in the lottery ticket hypothesis \citep{frankle2019lottery}.
Additionally, the existing permutation training algorithm shares the same form as weight projection methods in continual learning \citep{zeng2019continual}, as it can be viewed as applying an order-preserving projection from the free training results to the initial weight value.

\subsection{The algorithmic implementation}

This work is expected to facilitate the applications of permutation training. However, some practical issues still exist and deserve further investigation. 
As a preliminary attempt, the existing permutation training algorithm LaPerm guides the permutation by inner loops, thus incurring undesirable external computation costs. However, employing more advanced and efficient search approaches, such as the learn-to-rank formalism \citep{cao2007learning}, or permutation search algorithms in the study of LMC \citep{jordan2023repair, ainsworth2023git}, the benefits of permutation training will be actualized in practice. 
Importantly, our proof does not rely on any specific algorithmic implementations.
Additionally, the incompatible initialization issue plotted in Fig.~\ref{fig:1D}(b) emphasizes the need for developing more effective initializations as well as investigating the criterion of UAP-compatible initializations.

\section{Conclusion}
\label{sec:conclusion}
As a novel method, permutation training exhibits unique properties and practical potential.
To verify its efficacy, we prove the UAP of permutation-trained networks with random initialization for one-dimensional continuous functions. The proof is generalized from the equidistant scenario, where the key idea involves a four-pair construction of step function approximators in Fig.~\ref{fig:main}, along with a processing method to eliminate the impact of the remaining parameters. 
Our numerical experiments not only confirm the theoretical results through Fig.~\ref{fig:1D}(a), but also validate the prevalence of the UAP of permutation-trained networks in various initializations in Fig.~\ref{fig:1D}(b).
The discovery that commonly used initializations fail to achieve UAP also raises an intriguing question about the systematic characterization of initializations that satisfy UAP. Our observation in Fig.~\ref{fig:IsPermu} suggests that permutation training could be a novel tool to describe the network learning behavior.

Overall, we believe that the UAP of permutation-trained networks reveals the profound, yet untapped insights into how the weight encodes the learned information, highlighting the importance of further theoretical and practical exploration.

\section*{CRediT authorship contribution statement}

\textbf{Y. Cai:} Conceptualization, Validation, Formal analysis, Writing - original draft, Writing - review \& editing, Visualization, Project administration.
\textbf{G. Chen:} Conceptualization, Methodology, Software, Validation, Formal analysis, Investigation, Data curation, Writing - original draft, Writing - review \& editing, Visualization.
\textbf{Z. Qiao:} Validation, Resources, Writing - review \& editing, Supervision, Project administration, Funding acquisition.

\section*{Declaration of competing interest}

The authors declare that they have no known competing financial interests or personal relationships that could have appeared to influence the work reported in this paper.

\section*{Data availability}

The code and data accompanying this manuscript are publicly available on GitHub at \uline{https://github.com/DanclaChen/PermutationTraining}.

\section*{Acknowledgments}

This work is supported by the CAS AMSS-PolyU Joint Laboratory of Applied Mathematics.
The work of Y. Cai is supported by the National Natural Science Foundation of China under Grant 12201053 and 12171043.
The work of Z. Qiao is supported by the Hong Kong Research Grants Council
RFS grant RFS2021-5S03 and GRF grant 15302122, and the Hong Kong Polytechnic University grant 4-ZZLS.

\appendix

\section{The experimental setting} \label{app:experiment}
To establish the convergence property upon increasing network width, we sample the training points randomly and uniformly in $[-1,1]$, along with equidistantly distributed test points. The maximum training epoch is sufficiently large to ensure reaching the stable state. For the multi-dimensional case, we set the basis functions at a larger domain than the functions to ensure accuracy near the boundary. The scale is measured by $T_b$, which means the biases are in $[-1-T_b, 1+T_b]$ in each dimension. See Table~\ref{table:hyper} for detailed choice.

\begin{table}[htp!]
  \caption{Hyperparameters setting.}
  \label{table:hyper}
  \centering
  \begin{tabular}{lccc}
    \toprule
    Hyperparameters     & 1D & 2D & 3D  \\
    \midrule
    Architectures & $1$-$2n$-$1$-$1$ & $2$-$8n$-$1$-$1$  & $3$-$26n$-$1$-$1$  \\
    $k$ & 5 & 5 & 20 \\
    Batch size & 8 & 128 & 640  \\
    \# training points & 1600 & \multicolumn{2}{c}{51200}  \\
    \# test points & 400 & \multicolumn{2}{c}{12800} \\
    $T_b$ & 0 & \multicolumn{2}{c}{0.75} \\
    \midrule
    $n$ & \multicolumn{3}{c}{$\{ 10,20,40,80,160,320 \}$} \\
    \# epoch & \multicolumn{3}{c}{6400} \\
    Learning rate (LR) & \multicolumn{3}{c}{1e-3} \\
    Multiplicative factor of LR decay & \multicolumn{3}{c}{0.998} \\
    Multiplicative factor of $k$ increase & \multicolumn{3}{c}{$\sqrt[10]{1.002}$} \\

    \bottomrule
  \end{tabular}
\end{table}

The experiments are conducted in NVIDIA A100 Tensor Core GPU. However, our code is hardware-friendly since each case only consumes approximately 2GB of memory. The code uses \emph{torch.multiprocessing} in PyTorch 2.0.0 with ten different random seeds, namely $2022, 3022, \cdots, 12022$. Additionally, the training data of each case is sampled under the random seed $2022$ to ensure that they are comparable.

\section{Adam-based LaPerm algorithm} \label{app:algorithm}
Here we briefly introduce the Adam-based LaPerm algorithm with a fixed permutation period $k$ in \citet{qiu2020train}. The pseudocode is shown in Algorithm \ref{alg:LaPerm}.
\begin{algorithm}[htp]
    \caption{Adam-based LaPerm algorithm}
    \label{alg:LaPerm}
    \begin{algorithmic}
    \REQUIRE{Loss function $\mathcal{L}$, training set $D_T$, maximum training epoch $M_e$}
    \REQUIRE{Inner optimizer Adam, permutation period $k$, initial weights $W$}
    \STATE $\theta_0 = W \quad $   // Initialize the weights
    \FOR{$ t = 1,2,\ldots, M_e$}
        \STATE $\theta_{t} \leftarrow \text{Adam}(\mathcal{L}, \theta_{t-1}, D_T) \quad$   // Free training by Adam
        \IF{$k \text{ divides } t$}
            \STATE $\theta_{t} \leftarrow \tau_t(W) \quad$ // Apply the permutation
        \ENDIF
    \ENDFOR
    \end{algorithmic}
\end{algorithm}
This algorithm rearranges the initial weights $W$ guided by the order relationship of $\theta_{t}$, so the trained weights will hold the initial value. Therefore, it can be regarded as a permutation of the initial weights $W$.

\section{The impact of permutation period on convergence behavior} \label{app:k}
As a hyperparameter, the choice of permutation period $k$ during the implementation of LaPerm algorithms has the possibility to affect the convergence behavior. The correlation between the value of $k$ and the final accuracy is reported to be unambiguous (refer to Fig.~6 in \citet{qiu2020train}).
Generally, a larger $k$ is associated with slightly higher accuracy of single permutation training result, thus in our experiments, the weights are permuted after each $k$ epoch.
Fig. \ref{fig:k} evaluates the impact of $k$'s value on convergence behavior, whose results suggest that this effect remains negligible.

\begin{figure}[t!]
    \centering
    \includegraphics[height = 5cm]{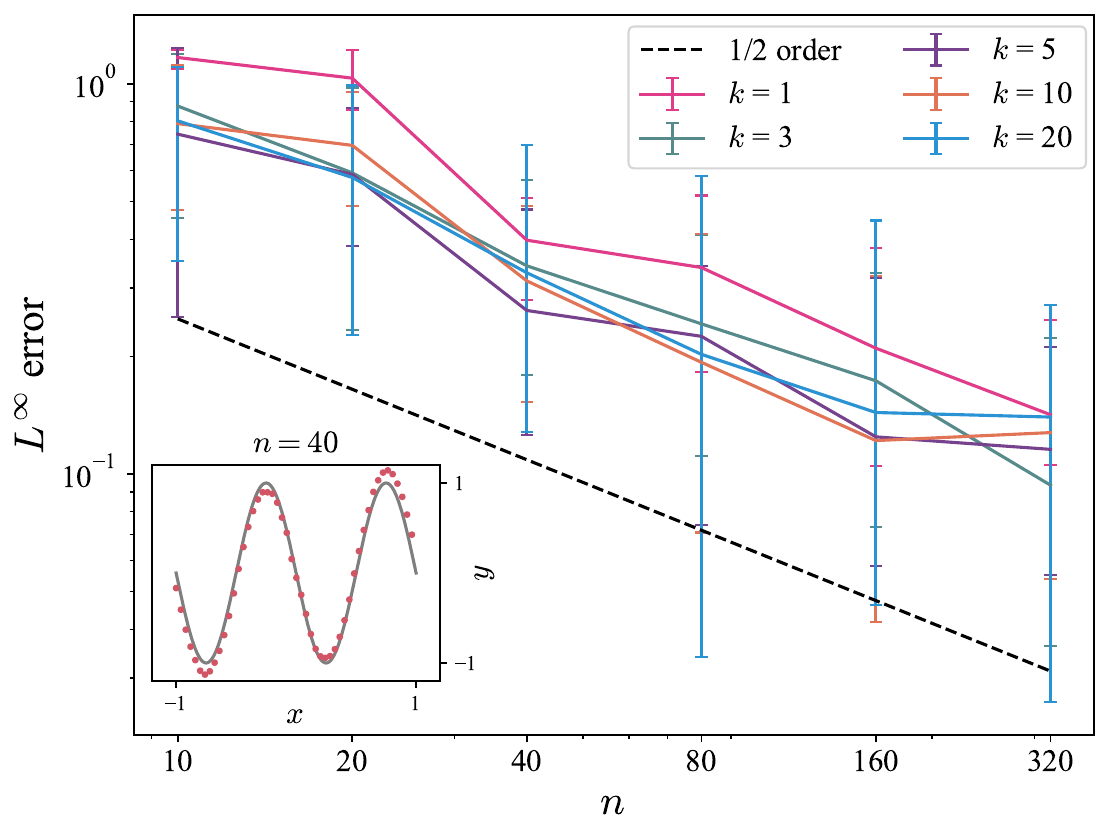}
    \caption{Approximating one-dimensional continuous function $y = a_0 + a_1 \sin(\pi x) + a_2 \cos(2 \pi x)+ a_3 \sin(3 \pi x)$ with equidistantly initialized network, where $x \in [-1, 1]$, and the value of permutation period $k = 1, 3, 5, 10, 20$, respectively. The inset in each panel presents the target function as lines and an example of the approximation result as dots.}
    \label{fig:k}
\end{figure}

\section{Generalize to the networks equipped with leaky-ReLU} \label{app:leakyReLU}
Extending our outcomes to leaky-ReLU is expected. This is because of the two crucial techniques deployed in our proof: constructing the step function approximators and eliminating the unused parameters, both can be applied to the leaky-ReLU.

Since our two-direction setting of basis function $\phi^\pm$ in Eq.~(\ref{basis}) can impart leaky-ReLU with symmetry equivalent to ReLU, it's feasible to construct a similar step function approximator by rederiving the relevant coefficient $p_i, q_i$. Furthermore, the existing eliminating method can be directly employed since a pair of leaky-ReLU basis functions can be constructed into linear functions for further processing.

As an initial attempt, we numerically examine the leaky-ReLU networks by only changing the activation function in the case of Fig.~\ref{fig:1D}(a). The results plotted in Fig.~\ref{fig:leakyReLU} exhibit the approximation power of leaky-ReLU networks. Unlike the ReLU cases, the random initialization outperforms the equidistant initialization. However, the $1/2$ convergence rate in previous ReLU cases cannot be attained here, probably because the proof based on leaky-ReLU may result in different constants, leading to potential discrepancies in details
when compared with the ReLU-based conclusions.

\begin{figure}[t]
    \centering
    \includegraphics[height = 5cm]{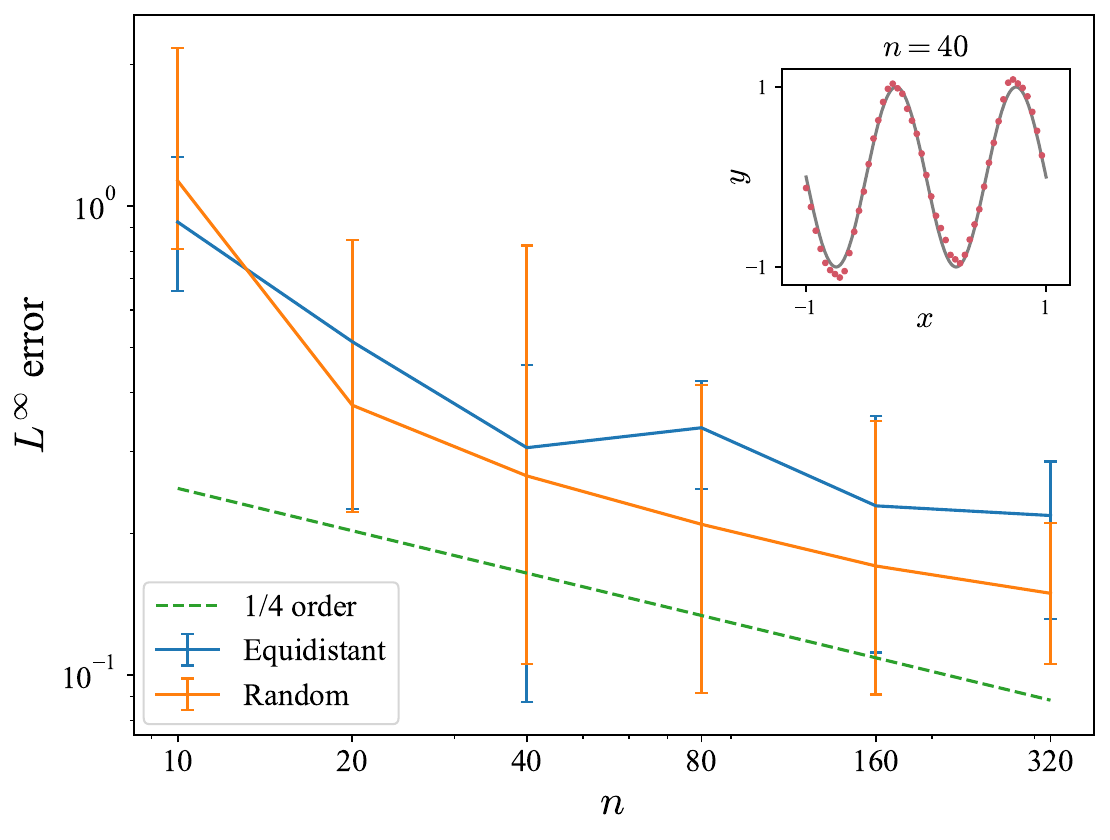}
    \caption{Approximating one-dimensional continuous function $y = -\sin(2\pi x)$ with equidistantly initialized network equipped with leaky-ReLU, where $x \in [-1, 1]$. The inset in each panel presents the target function as lines and an example of the approximation result as dots.}
    \label{fig:leakyReLU}
\end{figure}

\section{Extending to the networks with deeper architecture or sparse initializations} \label{app:extending}
This paper follows the conventional setting of free training UAP study and primarily focuses on shallow fully connected networks. This is based on the premise that the conclusions can be readily extended to scenarios involving deeper networks.
As for deep networks, the permutation training setting seems to inhibit a direct extension. However, our linear reorganization in Eq.~(\ref{eq:unused_linear}) enables construct an identity function $y = x$ using a pair of basis functions $y = p_n \phi_1^+(x) +q_n \phi_1^-(x)$, where $b_1 = 0$, $p_n = 1$, $q_n = -1$. This process enables us to utilize identity functions within subsequent layers. Consequently, the deep networks scenario parallels the cases discussed in this paper, which allows us to realize UAP within deep networks.

Additionally, this paper's conclusions could be extended to sparse initialization scenarios, an area where permutation training shows significant advantages (see Fig. 6(e) in \citet{qiu2020train}). 
On an intuitive level, our proof can explain these advantages, as a primary challenge of permutation training is managing unused basis functions. However, in sparse situations, we can conveniently deal with this problem by assigning the remaining basis functions zero coefficients, thereby facilitating the proof process.

\section{A toy example of our step-matching and constant-matching construction} \label{app:toy}
To illustrate the key idea of our construction, we consider a toy example of approaching a step function $f_s(x) = 0.8 \, \chi(x - 0.4)$ for $x \in [0,1]$ by a network in Eq.~(\ref{NN}) with $n = 11$ and equidistantly initialized $\{ b_k \}_{k = 1}^{11} = \{ 0, 0.1, \cdots, 1 \}$. This setting is considered after applying Lemma 2.1 with $\Delta h = 0.8$.

Following our step-matching construction, we can choose the basis functions with $\{ b_k \}_{k = 1}^4 = \{ 0.1, 0.3, 0.6, 0.8 \}$ to be step-matching in Eq.~(\ref{eq:coefficients_step}), leading to a step function approximator $f_s^{\text{NN}} $. The height of $f_s^{\text{NN}}$ is given by $h = 0.4$. Therefore, the target step function $f_s$ can be approximated by $f_s^{\text{NN}}$ along with shifting $\Delta h/2 = 0.4$ and scaling $\Delta h / h = 2$ with the following error estimation:
\begin{equation*}
\left| \left[ \frac{\Delta h}{h}(x) \, f_s^{\text{NN}} + \frac{h}{2} \right] - f_s(x) \right| \le 0.8 = \Delta h, \quad x \in [0,1].
\end{equation*}
The target step function $f_s$ and its approximator $f_s^{\text{NN}}$ are plotted in Fig.~\ref{fig:toyStep}. 

\begin{figure}
    \centering
    \includegraphics[width=0.5\textwidth]{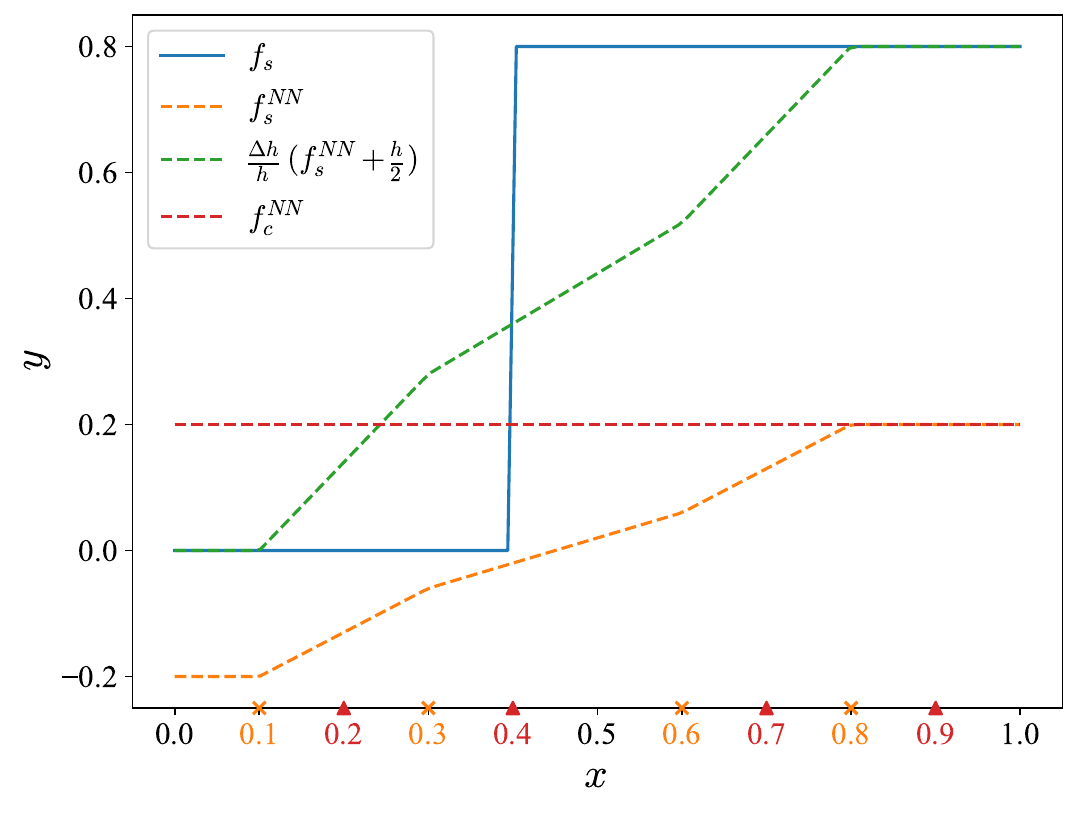}
    \caption{Approximating a step function $f_s(x) = 0.8 \, \chi(x - 0.4)$ and a constant function $f_c(x) = 0.2$ in $x \in [0, 1]$ with the step-function approximator $f_s^{\text{NN}}$ and the constant function approximator $f_c^{\text{NN}}$, respectively. The target functions are plotted as lines, and the approximation results are shown as dashed lines. The locations of the basis functions used for $f_s^{\text{NN}}$ and $f_c^{\text{NN}}$ are marked with "$\times$" and "$\blacktriangle$" on the $x$-axis, respectively. }
    \label{fig:toyStep}
\end{figure}

Additionally, we can consider the constant function approximator $f_c^{\text{NN}}$ following our constant-matching construction in Eq.~(\ref{eq:coefficients_constant}). We choose the basis functions located at $\{ 0.2, 0.4, 0.7, 0.9 \}$ to be constant-matching, leading to a constant function approximator $f_c^{\text{NN}} = h/2$. The height of $f_c^{\text{NN}}$ is given by $h/2 = 0.4$. From the shape of $f_c^{\text{NN}}$ plotted in Fig.~\ref{fig:toyStep}, we can see that it can approximate a constant function without any error.

\section{A toy example of Lemma~\ref{th:Leibniz}}   \label{app:Leibniz}

To illustrate the processing method of Lemma~\ref{th:Leibniz}, we consider a given sequence 
\begin{equation*}
 3, \, 1, \, 4, \, -1, \, 5, \, 9,
\end{equation*}
with the largest gap $\Delta c = 4$. The aim is to choose choose $m_i \in \{-1, 1 \}$ to reduce the absolute value of the result
\begin{equation*}
S = 3 \cdot m_1 + 1 \cdot m_2 + 4 \cdot m_3 - 1 \cdot m_4 + 5 \cdot m_5 + 9 \cdot m_6.
\end{equation*}
This can be achieved by the following steps:
\begin{itemize}
    \item[1.] Rearrange the sequence into $9, \, 5, \, 4, \, 3, \, 1, \, -1$;
    \item[2.] Compute the gap between every two adjacent elements as $4, \, 1, \, 2$;
    \item[3.] Reorganize the gap sequence into an alternating series and compute the result as $S = 4 - 2 + 1 = 3$;
    \item[4.] Transfer the choice of $m_i \in \{-1, 1 \}$ to the original sequence as 
    \begin{equation*}
        S = (9 - 5) - [1 - (-1)] + (4 - 3) = 9 - 5 - 1 - 1 + 4 - 3 = 3 < 4 = \Delta c.
    \end{equation*}
\end{itemize}
Therefore, the result $S = 3$ can be achieved by choosing $m_1 = -1, m_2 = -1, m_3 = 1, m_4 = 1, m_5 = -1, m_6 = 1$.

\bibliographystyle{elsarticle-harv} 
\bibliography{refs}

\end{document}